\DeclareMathOperator*{\argmin}{arg\,min}
\definecolor{kevincolor}{RGB}{147,112,219}
\newtheorem{lemma}{Lemma}
\newtheorem{theorem}{Theorem}
\newtheorem{assumption}{Assumption}
\newtheorem{definition}{Definition}
\crefname{section}{Sec.}{Secs.}
\Crefname{section}{Section}{Sections}
\Crefname{table}{Table}{Tables}
\crefname{table}{Tab.}{Tabs.}
\begin{document}

\title{Trainable Projected Gradient Method for Robust Fine-tuning}


\author{
\textbf{Junjiao Tian\thanks{Work partially done during internship at Meta.}\,\,\textsuperscript{1}
\quad Xiaoliang Dai\textsuperscript{2}
\quad Chih-Yao Ma\textsuperscript{2}} \\ 
\textbf{Zecheng He\textsuperscript{2}
\quad Yen-Cheng Liu\textsuperscript{1}
\quad Zsolt Kira\textsuperscript{1}} \\
\\
\textsuperscript{1}Georgia Institute of Technology
\quad \textsuperscript{2}Meta}

\maketitle

\begin{abstract}

\looseness=-1  
Recent studies on transfer learning have shown that selectively fine-tuning a subset of layers or customizing different learning rates for each layer can greatly improve robustness to out-of-distribution (OOD) data and retain generalization capability in the pre-trained models. However, most of these methods employ manually crafted heuristics or expensive hyper-parameter searches, which prevent them from scaling up to large datasets and neural networks. To solve this problem, we propose Trainable Projected Gradient Method (TPGM) to automatically learn the constraint imposed for each layer for a fine-grained fine-tuning regularization. This is motivated by formulating fine-tuning as a bi-level constrained optimization problem. Specifically, TPGM maintains a set of projection radii, i.e., distance constraints between the fine-tuned model and the pre-trained model, for each layer, and enforces them through weight projections. To learn the constraints, we propose a bi-level optimization to automatically learn the best set of projection radii in an end-to-end manner. {\color{black}Theoretically, we show that the bi-level optimization formulation could explain the regularization capability of TPGM.} Empirically, with little hyper-parameter search cost, TPGM outperforms existing fine-tuning methods in OOD performance while matching the best in-distribution (ID) performance. For example, when fine-tuned on DomainNet-Real and ImageNet, compared to vanilla fine-tuning, TPGM shows $22\%$ and $10\%$ relative OOD improvement respectively on their sketch counterparts.  Code is available at \url{https://github.com/PotatoTian/TPGM}.
\end{abstract}

\section{Introduction}
  
Improving out-of-distribution (OOD) robustness such that a vision model can be trusted reliably across a variety of conditions beyond the in-distribution (ID) training data has been a central research topic in deep learning. For example, domain adaptation~\cite{you2019universal,wang2018deep}, domain generalization~\cite{zhou2022domain,muandet2013domain}, and out-of-distribution calibration~\cite{tian2021geometric} are examples of related fields. More recently, large pre-trained models, such as CLIP~\cite{radford2021learning} (pre-trained on 400M image-text pairs), have demonstrated large gains in OOD robustness, thanks to the ever-increasing amount of pre-training data as well as effective architectures and optimization methods. However, fine-tuning such models to other tasks generally results in worse OOD generalization as the model over-fits to the new data and \textit{forgets} the pre-trained features~\cite{radford2021learning}.  \textit{A natural goal is to preserve the generalization capability acquired by the pre-trained model when fine-tuning it to a downstream task.} 

\begin{figure}
     \centering
     \includegraphics[width=0.45\textwidth]{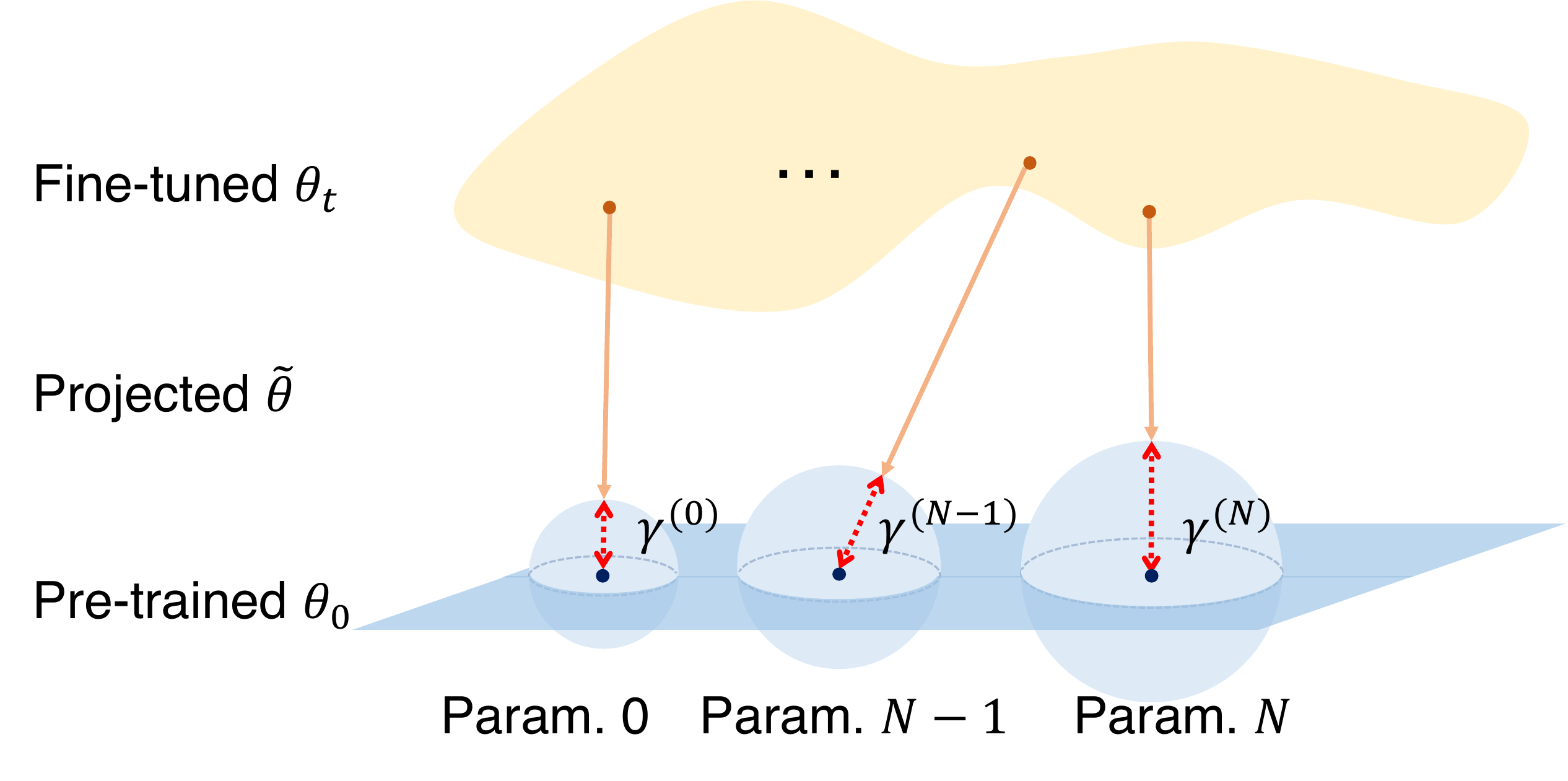}
     \caption{\textbf{Illustration of TPGM.} TPGM learns different weight projection radii, $\gamma$, for each layer between a fine-tuned model $\theta_t$ and a pre-trained model $\theta_0$ and enforces the constraints through projection to obtain a projected model $\Tilde{\theta}$.}
     \label{fig:clip_resnet_constraints}
\end{figure}
A recent empirical study shows that aggressive fine-tuning strategies such as using a large learning rate can decrease OOD robustness~\cite{wortsman2022robust}. We hypothesize that the \textit{forgetting} of the generalization capability of the pre-trained model in the course of fine-tuning is due to \textit{unconstrained} optimization on the new training data~\cite{xuhong2018explicit}. This conjecture is not surprising, because several prior works, even though they did not focus on OOD robustness, have discovered that encouraging a close distance to the pre-trained model weights can improve ID generalization, i.e., avoiding over-fitting to the training data~\cite{xuhong2018explicit,gouk2020distance}. Similarly, if suitable distance constraints are enforced, we expect the model to behave more like the pre-trained model and thus retain more of its generalization capability. The question is \textit{where} to enforce distance constraints and \textit{how} to optimize them? 

Several works have demonstrated the importance of treating each layer differently during fine-tuning. For example, a new work~\cite{lee2022surgical} discovers that selectively fine-tuning a subset of layers can lead to improved robustness to distribution shift. Another work~\cite{shen2021partial} shows that optimizing a different learning rate for each layer is beneficial for few-shot learning. Therefore, we propose to enforce a different constraint for each layer. However, existing works either use manually crafted heuristics or expensive hyper-parameter search, which prevent them from scaling up to large datasets and neural networks. For example, the prior work~\cite{shen2021partial} using evolutionary search for hyper-parameters can only scale up to a custom 6-layer ConvNet and a ResNet-12 for few-shot learning. The computation and time for searching hyper-parameters become increasingly infeasible for larger datasets, let alone scaling up the combinatorial search space to all layers. For example, a ViT-base~\cite{vaswani2017attention} model has 154 trainable parameter groups including both weights, biases, and embeddings\footnote{For example, for a linear  layer $y=\mathbf{W}x+\mathbf{b}$, we need to use separate distance constraints for $\mathbf{W}$ and $\mathbf{b}$.}. This leads to a search space with more than $10^{45}$ combinations even if we allow only two choices per constraint parameter, which makes the search prohibitively expensive.

\looseness=-1 To solve this problem, we propose a trainable projected gradient method (TPGM) to support layer-wise regularization optimization. Specifically, TPGM adopts \textit{trainable} weight projection constraints $\gamma$, which we refer to as \textit{projection radii}, and incorporates them in the forward pass of the main model to optimize. Intuitively, as shown in Fig.~\ref{fig:clip_resnet_constraints}, TPGM maintains a set of weight projection radii $\gamma$ i.e., the distance between the pre-trained model ($\theta_0$) and the current fine-tuned model ($\theta_t$), for each layer of a neural network and updates them. The projection radii control how much "freedom" each layer has to grow. For example, if the model weights increase outside of the norm ball defined by $\gamma$ and $\|\cdot\|$, the projection operator will project them back to be within the constraints. To learn the weight projection radii in a principled manner, we propose to use alternating optimization between the model weights and the projection radii, motivated by formulating fine-tuning as a \textit{bi-level} constrained problem (Sec.~\ref{sec:finetune_min}). {\color{black}We theoretically show that the bi-level formulation could explain the behavior of TPGM (Sec.~\ref{sec:theory}). }

Empirically, we conduct thorough experiments on large-scale datasets, DomainNet~\cite{peng2019moment} and ImageNet~\cite{deng2009imagenet}, using different architectures. Under the premise of preserving ID performance, i.e., OOD robustness should not come at the expense of worse ID accuracy, TPGM outperforms existing approaches with little effort for hyper-parameter tuning. Further analysis of the learned projection radii reveals that lower layers (layers closer to the input) in a network require stronger regularization while higher layers (layers closer to the output) need more flexibility. This observation is in line with the common belief that lower layers learn more general features while higher layers specialize to each dataset~\cite{neyshabur2020being,raghu2019transfusion,yosinski2014transferable,wortsman2022robust}. Therefore, when conducting transfer learning such as fine-tuning, we need to treat each layer differently. Our contributions are summarized below. 

\begin{itemize}
    \item \looseness=-1 We propose a trainable projected gradient method (TPGM) for fine-tuning to automatically learn the distance constraints for each layer in a neural network during fine-tuning.  
    \item We conduct experiments on different datasets and architectures to show significantly improved OOD generalization while matching ID performance. 
    
    \item {\color{black}We theoretically study TPGM using linear models to show that bi-level optimization could explain the regularization capability of TPGM.} 
\end{itemize}
\section{Related Works}

\textbf{Fine-tuning to Boost ID Performance.}  SpotTune~\cite{guo2019spottune} introduces an additional policy network, which outputs a linear interpolation ratio between the pre-trained model ($\theta_0$) and fine-tuned model ($\theta_t$) based on the input. Instead of directly regularizing the weight space, DELTA~\cite{li2019delta} proposes to regularize the output (feature maps) of $\theta_0$ and $\theta_t$. However, SpotTune introduces an additional network and needs to keep both the pre-trained model and fine-tuned model for inference. DELTA specifically regularizes feature maps generated by convolution layers. In this work, we focus more on general methods, which do not increase inference costs and are applicable to a broader range of models. L2-SP~\cite{xuhong2018explicit} proposes an explicit inductive bias for fine-tuning. Specifically, it uses an L2 regularization to minimize the distance between $\theta_0$ and $\theta_t$. Most recently, MARS-SP~\cite{godwin2021simple} adopts the projected gradient method (PGM) to constrain $\theta_t$ within a small sphere centered on the pre-trained model $\theta_0$. MARS-SP has shown great performance against its prior works. However, we find it very sensitive to hyperparameter tuning. Nonetheless, our work is inspired by and improves PGM by 1) incorporating \textit{trainable} weight projection radii for each layer and 2) developing a \textit{bi-level} optimization algorithm to learn them.   

\textbf{Fine-tuning to Improve OOD Generalization.} As the size of the target dataset increases and better architectures are developed, the benefit from pre-training on the target ID performance diminishes~\cite{he2019rethinking}. However, the power of pre-training goes beyond boosting ID performance. A recent work~\cite{wen2021rethinking} finds that using pre-trained models can greatly improve robustness on OOD datasets and uncertainty-related tasks such as confidence calibration~\cite{guo2017calibration} and OOD detection~\cite{devries2018learning}. Moreover, the fine-tuning strategy used also plays an important role in improving OOD generalization. LP-FT~\cite{kumar2022fine} shows that simultaneously fine-tuning the last linear layer and the feature backbone can distort pre-trained features and thus decreases OOD generalization. A simple strategy of linear probing, i.e., training only the classifier layer, followed by fine-tuning the entire network can greatly mitigate this distortion and improve OOD generalization. WISE~\cite{wortsman2022robust} demonstrates impressive OOD generalization gains by linearly interpolating $\theta_0$ and $\theta_t$. However, this strategy only applies to image-text pre-trained models with zero-shot classifiers such as CLIP~\cite{radford2021learning} because WISE requires the model to have \textit{linear connectivity}. In most cases, linear interpolation between two models results in no better performance than random initialization~\cite{frankle2020linear}.


\section{Method}
In the introduction, we motivated the benefit of explicitly maintaining distance constraints between a pre-trained model and a fine-tuned model~\cite{xuhong2018explicit,gouk2020distance}.  However, it is not clear how to search the space of hyper-parameters (distance constraints) \textit{especially} if we want to do this per layer as the search space grows combinatorially with the number of layers. To this end, we pose the search as a bi-level constrained optimization problem in Sec.~\ref{sec:finetune_min} and introduce closed-form projection in Sec.~\ref{sec:pgm}. Then we present the proposed TPGM algorithm in Sec.~\ref{sec:tpgm}. {\color{black}Finally, we theoretically show that the bi-level optimization design enables TPGM to learn different constraints for each layer in Sec.~\ref{sec:theory}.}

\subsection{Fine-tuning as a Bi-level Constrained Problem}
\label{sec:finetune_min}

Machine learning algorithms usually tune hyper-parameters, e.g., learning rate, weight decay, etc., on a \textit{validation} split. Mathematically, this procedure is equivalent to a bi-level minimization problem. Let $(x,y)$ denote a pair of input data and $\mathcal{L}(\cdot)$ denote the task loss function. The minimization problem can be written as  {\color{black}
\begin{align}
\label{eq:id_robustness}
     \min_{\lambda|(x,y)\in\mathcal{D}_{val}} \mathcal{L}(x,y;\argmin_{\theta_t|(x,y)\in\mathcal{D}_{tr}}  \mathcal{L}(x,y;\theta_t,\lambda),\lambda),
\end{align}}
where $\theta_t$ denotes the trainable model weights and $\lambda$ denotes the hyper-parameters such as learning rate. $\mathcal{D}_{tr}$ is the set of training data and $\mathcal{D}_{val}$ is the set of validation data.

Now, we extend this formulation to fine-tune a pre-trained model. Specifically, it has been shown that maintaining a close \textit{distance} to the pre-trained model improves a model's generalization and robustness~\cite{hendrycks2019using,xuhong2018explicit}. A recent paper~\cite{gouk2020distance} formalizes the concept of maintaining distance as a constrained optimization problem, in which the distance between the new model and the pre-trained model is measured by matrix norms $\|\cdot\|_*$. Mathematically, combined with Eq.~\ref{eq:id_robustness}, we further extend the constrained optimization to a \textit{bi-level constrained}  minimization problem as
\begin{align}
\label{eq:constrained_min}
     \min_{\lambda,\gamma|(x,y)\in\mathcal{D}_{val}} \mathcal{L}(x,y;\argmin_{\theta_t|(x,y)\in\mathcal{D}_{tr}}  \mathcal{L}(x,y;\theta_t,\lambda),\lambda),\\\nonumber \quad\text{s.t.}\quad \|\theta_t-\theta_0\|_*\leq\gamma,
\end{align}
where $\|\theta_t-\theta_0\|_*$ denotes a norm induced distance between the pre-trained model $\theta_0$ and the new model $\theta_t$. Optimizing Eq.~\ref{eq:constrained_min} enforces the model to stay within a distance $\gamma$ from the pre-trained model. 
\subsection{Projected Gradient Method}
\label{sec:pgm}
One method to optimize a constrained problem is \textit{projected gradient method} (PGM)~\cite{iusem2003convergence}. PGM projects the updated model weights to be within the constraint, i.e., $\|\theta_t-\theta_0\|_*\leq\gamma$. However, in general, most projection operations are optimization problems themselves with only a few exceptions having closed from solutions. One example is L2-norm projection $\Pi_{l2}$. Projecting a matrix $\theta_t$ to $\gamma$ distance away, measured by L2 norm, from another matrix $\theta_0$ is a closed form operation as shown in Eq.~\ref{eq:l2_proj}.
\begin{align}
\label{eq:l2_proj}
\Pi_{l2}(\theta_0,\theta_t,\gamma):\Tilde{\theta} = \theta_0 + &\frac{1}{\text{max}\left(1,\frac{\|\theta_t-\theta_0\|_2}{\gamma}\right)}(\theta_t-\theta_0)
\end{align}

The prior work~\cite{gouk2020distance} uses the maximum absolute row sum (MARS) matrix norm because it has a closed form \textit{approximation} that enables fast projection without optimization as well. The MARS approximate projection operator $\Pi_{mars}$ is defined in Eq.~\ref{eq:mars_proj}.
\begin{align}
\label{eq:mars_proj}
\Pi_{mars}(\theta_0,\theta_t,\gamma):\Tilde{\theta} = \theta_0 + &\frac{1}{\text{max}\left(1,\frac{\|\theta_t-\theta_0\|_\infty}{\gamma}\right)}(\theta_t-\theta_0)
\end{align}
$\|\cdot\|_\infty$ denotes the MARS matrix norm, $\|A\|_\infty = \max_j\sum_{i=1}|A_{j,i}|$. Even though we use closed-form projection to avoid additional computation, the projection radius $\gamma$ needs to be pre-determined. Searching for a single weight projection parameter for all layers is already challenging because the scale of $\gamma$ is unknown let alone tailoring a weight projection radius to each layer. In this paper, we do not investigate specific properties of projections, which are orthogonal to our contributions. Therefore, we will benchmark both projections and report the one with better results and leave the comparison to Appendix.

\subsection{Trainable Projected Gradient Method (TPGM)}
Inspired by the projected gradient method, we propose a \textit{trainable} approach to solve the bi-level constrained problem in Eq.~\ref{eq:constrained_min} by integrating the projection operator in Eq.~\ref{eq:l2_proj} or Eq.~\ref{eq:mars_proj} into the forward pass of a model, through which the weight projection radii $\gamma$ can be learned through backpropagation. Specifically, the algorithm consists of three functions: \textit{model update}, \textit{projection update}, and \textit{projection}. A summary of TPGM is presented in Alg.~\ref{alg:alg1} and details of the \textit{projection update} function are in Alg.~\ref{alg:alg2}.

\label{sec:tpgm}
\begin{algorithm}[t]
\caption{TPGM}\label{alg:alg1}
\KwData {$\mathcal{D}_{tr}$,$\mathcal{D}_{val}$}
\KwResult {$\Tilde{\theta}_{t+1}$}
Initialize $\Tilde{\theta}_0=\theta_0,\gamma_0=\epsilon$ \\
\For{$t=\{0,...,T-1\}$}{
    $\theta_{t+1} =\theta_{t} - \eta_t\nabla_\theta \mathcal{L}(x,y;\Tilde{\theta_{t}})\quad x,y\in\mathcal{D}_{tr}$\\
    \If{$t$ \text{mod} $f_{proj}$  $== 0$}{
    $\gamma_{t+1} =\text{ProjectionUpdate}(\mathcal{D}_{val},\theta_0,\theta_{t+1},\gamma_t)$\\
    $\Tilde{\theta}_{t+1} =\Pi(\theta_0,\theta_{t+1},\gamma_{t+1})$}
}
\end{algorithm}

\begin{algorithm}[t]

\caption{ProjectionUpdate}\label{alg:alg2}
\KwData {$\mathcal{D}_{val}$}
\KwResult {$\gamma_{t+1}$}
\For{$\tau=\{0,...,T_{proj}-1\}$}{
    $\Tilde{\theta} = \Pi(\theta_0,\theta_{t+1},\gamma_\tau)$\\
    $\gamma_{\tau+1} = \gamma_\tau - \zeta\nabla_\gamma \mathcal{L}(x,y;\Tilde{\theta}) \quad x,y\in\mathcal{D}_{val}$
}

\end{algorithm}

\textbf{Model Update.} TPGM first takes an \textit{unconstrained} gradient descent step. Let $\theta_{t+1}$ denote the updated model parameters at the gradient descent step $t$ for $t\geq0$ where $\theta_0$ denotes the pre-trained initialization. This update is calculated on the loss function $\mathcal{L}(x,y;\theta_t)$ where $(x,y)$ are sampled \textit{training} data, i.e., $(x,y)\in\mathcal{D}_{tr}$. This corresponds to a regular gradient descent step in the conventional setting and the inner minimization in Eq.~\ref{eq:constrained_min}. For example, if vanilla SGD is used in this step, then,
\begin{align*}
    \theta_{t+1} = \theta_t-\eta_t\nabla_\theta\mathcal{L}(x,y;\theta_t), \quad (x,y)\in \mathcal{D}_{tr}.
\end{align*}
Any other optimizers, e.g., Adam~\cite{kingma2014adam}, can be used instead.

\textbf{Projection Update.} The \textit{projection update} function optimizes the weight projection parameters $\gamma_t$ iteratively. As shown in Alg.~\ref{alg:alg2}, the optimization loops for $T_{proj}$ steps. In Alg.~\ref{alg:alg2}, we use SGD as an example for clarity. Any other optimizer can be used. Specifically, using $\theta_{t+1}$ and the closed form projection operation in Eq.~\ref{eq:l2_proj} (or Eq.~\ref{eq:mars_proj}), we construct a projected model $\Tilde{\theta}$ with a \textit{trainable} projection parameter $\gamma_t$ for $t\geq0$ for each layer, where $\gamma_0$ is initialized to a small value $\epsilon$. Then, we optimize the projection parameters using the loss function $\mathcal{L}(x,y;\gamma_t)$ where $(x,y)$ are sampled \textit{validation} data, i.e., $(x,y)\in\mathcal{D}_{val}$. Crucially, in this step, only the weight projection parameters $\gamma_t$ are updated while the updated model $\theta_{t+1}$ remains \textit{frozen}. In other words, no gradients of the model are calculated on the validation data. This is important to avoid contamination of the validation data. The projection update function corresponds to the outer minimization of the constrained problem in Eq.~\ref{eq:constrained_min}.

\textbf{Projection.} Finally, after a new set of projection parameters $\gamma_t$ is updated, we apply the learned projection radii to the updated model $\theta_{t+1}$ using Eq.~\ref{eq:l2_proj} (or Eq.~\ref{eq:mars_proj}). This amounts to enforcing the constraint $\|\theta-\theta_0\|_*\leq\gamma$ in Eq~\ref{eq:constrained_min}. The \textit{projection update} and \textit{projection} functions can be called frequently controlled by a hyperparameter ($f_{proj}$ in Alg.~\ref{alg:alg1}). We will show that for certain pre-trained models, it is sufficient to only call these two functions once at the end of the training, i.e., when $f_{proj}=T-1$ (Sec.~\ref{sec:transformer_exp}). Moreover, we found that in a few cases, TPGM could lead to under-fitting because of its iterative nature. However, the problem can be easily mitigated with total variation smoothing~\cite{chen2010adaptive,condat2013direct}. Since we only observed this in one setting in our experiments, we defer the discussion to Appendix~\ref{sec:smoothing}.

We summarize TPGM in Alg.~\ref{alg:alg1}.  Intuitively, TPGM maintains a set of weight projection parameters for each layer of a neural network and updates them. The projection parameters control how much ``freedom'' each layer has to grow.  As we will observe later, when fine-tuning a model, layers close to the input generally require smaller changes than layers close to the output. This property helps preserve generalization capabilities obtained by the pre-trained model. TPGM inevitably adds some computation overhead. We provide additional discussion on it in Appendix~\ref{sec:computation}.
\subsection{Bi-level Optimization}
\label{sec:theory}
 {\color{black}Following a common strategy in studying transfer learning~\cite{wu2020understanding,tripuraneni2020theory,du2020few,xie2020n,wortsman2022robust}, we theoretically study TPGM using linear models and explain why optimizing the bi-level problem in Eq.~\ref{eq:constrained_min} could enable the regularization capability of TPGM.}

\textbf{Problem Setup.}  Let $x\in \mathbb{R}^{d}$ denote an ID data and the corresponding label $y$ is generated by a \textit{ground truth} linear model $\theta_*\in \mathbb{R}^d $, i.e., $y=\theta_*^Tx$.  To construct the training set, we sample $n$ training data, where $n<d$, and stack the sampled data into a data matrix $\mathbf{X}_{tr}\in\mathbb{R}^{d\times n}$. Accordingly, the labels form a vector $\mathbf{Y}_{tr}=\mathbf{X}_{tr}^T\theta_*\in\mathbb{R}^{n}$. The 
\textit{training} goal is to minimize the \textit{empirical} loss.
\begin{align}
    \label{eq:training_obj}
\mathcal{L}(\mathbf{X}_{tr},\mathbf{Y}_{tr};\theta)=\|\mathbf{X}_{tr}^T\theta-\mathbf{Y}_{tr}\|_2
\end{align}

Note that this forms an \textit{over-parameterized} linear system, i.e., there are more parameters than equations, because $n<d$. This is similar to how modern neural networks are over-parameterized with respect to the data.

{\color{black}
\textbf{Complementary Decomposition using SVD.} For the analysis, we make an independence assumption on the data matrix $\mathbf{X}_{tr}$. This assumption exists for notation simplicity and can be relaxed easily.
\begin{assumption}
\label{assmp:independence}
Let the $n$ training data be linearly independent. The following SVD exists for the data matrix $\mathbf{X}_{tr}$.
\begin{align*}
\mathbf{X}_{tr} = \mathbf{U}\mathbf{\Sigma} \mathbf{V}^T,\quad \mathbf{U}\in\mathbb{R}^{d\times n},\mathbf{\Sigma}\in\mathbb{R}^{n\times n}, \mathbf{V}\in\mathbb{R}^{n\times n}.
\end{align*}
\end{assumption}

Consequently, we can decompose any vector $x\in\mathbb{R}^{d}$ into two components, $x=\mathbf{U}\tau + \mathbf{U}_{\bot}\tau_{\bot}$, where $\mathbf{U}$ is the basis for the span of training samples, $\mathbf{U}_{\bot}\in\mathbb{R}^{d\times (d-n)}$ is the basis for the complementary subspace, and $\tau\in\mathbb{R}^{n}$, $\tau_{\bot}\in\mathbb{R}^{d-n}$ are the corresponding coordinates. There are infinitely many solutions to Eq.~\ref{eq:training_obj} because this is an over-parameterized system.  

\begin{definition}
We denote a \textit{projected model} as $\Tilde{\theta} = \theta_0 + \alpha (\theta - \theta_0)$ (obtained using Eq.~\ref{eq:l2_proj} or Eq.~\ref{eq:mars_proj}), where  $\theta$ is one minimizer of Eq.~\ref{eq:training_obj}, $\theta_0$ is the pre-trained model and $0\leq\alpha\leq1$ is the projection ratio. 
\end{definition}

To quantify the effects of projection $\alpha$, we can look at the average performance of the projected model $\Tilde{\theta}$ on test data. Consequently, we investigate the \textit{expected} loss of the projected model over the entire data space. 
\begin{align}
    \label{eq:expected_obj}
    \mathbb{E}[\mathcal{L}(x,y;\Tilde{\theta})]= \mathbb{E}\left[\left\|{\Tilde{\theta}}^Tx-y \right\|_2\right]
\end{align}
We provide the following theorem to shed light on how projection affects the \textit{expected} loss and what it depends on.}

\begin{theorem}
\label{thm:thm_1}
Let Assumption~\ref{assmp:independence} hold, the expected loss of $\Tilde{\theta}$ in Eq.~\ref{eq:expected_obj} is upper bounded as the following,
\begin{align}
\label{eq:expected_upper}
    \mathbb{E}\left[\left\|\Tilde{\theta}^Tx-y \right\|_2\right] & \leq \underbrace{(1-\alpha)\epsilon{\tau}}_{in-span} + \underbrace{\left(\epsilon+\alpha \left\|{\theta}-\theta_0 \right\|_2\right){\tau}_{\bot}}_{out-span},
\end{align}

\noindent where $\epsilon = \left\|{\theta}_0-\theta_* \right\|_2$ and ${\tau} \doteq \mathbb{E}[\|\tau\|_2]$, ${\tau}_\bot \doteq \mathbb{E}[\|\tau_\bot\|_2]$. A complete proof is provided in Appendix~\ref{sec:proof}.
\end{theorem}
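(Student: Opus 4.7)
The plan is to bound the scalar residual $\tilde\theta^T x - y$ pointwise, split it along the training span and its complement, and then take expectations. The crucial structural fact to exploit is that, since $\mathbf{X}_{tr}$ has $n$ linearly independent columns in $\mathbb{R}^d$ with $n<d$, the overparameterised system $\mathbf{X}_{tr}^T\theta=\mathbf{Y}_{tr}$ is feasible at $\theta_*$, so any minimizer $\theta$ of Eq.~\ref{eq:training_obj} satisfies $\mathbf{X}_{tr}^T(\theta-\theta_*)=0$; equivalently, by Assumption~\ref{assmp:independence}, $\mathbf{U}^T(\theta-\theta_*)=0$, i.e., $\theta-\theta_*$ lives entirely in the complementary subspace spanned by $\mathbf{U}_\bot$.

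First I would rewrite the residual using $y=\theta_*^T x$ and the definition of $\tilde\theta$:
\begin{align*}
\tilde\theta-\theta_* \;=\; (1-\alpha)(\theta_0-\theta_*)+\alpha(\theta-\theta_*),
\end{align*}
so $\tilde\theta^T x - y = (1-\alpha)(\theta_0-\theta_*)^T x + \alpha(\theta-\theta_*)^T x$. Next I substitute the decomposition $x=\mathbf{U}\tau+\mathbf{U}_\bot \tau_\bot$ and use $\mathbf{U}^T(\theta-\theta_*)=0$, which kills the $\alpha(\theta-\theta_*)^T\mathbf{U}\tau$ term. What remains is naturally partitioned into an in-span contribution
\begin{align*}
(1-\alpha)(\theta_0-\theta_*)^T\mathbf{U}\tau,
\end{align*}
and an out-of-span contribution
\begin{align*}
(1-\alpha)(\theta_0-\theta_*)^T\mathbf{U}_\bot \tau_\bot + \alpha(\theta-\theta_*)^T\mathbf{U}_\bot \tau_\bot .
\end{align*}

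Then I would apply the triangle inequality and Cauchy--Schwarz to each piece, noting that $\mathbf{U}$ and $\mathbf{U}_\bot$ have orthonormal columns and hence do not inflate norms. The in-span piece is bounded by $(1-\alpha)\epsilon\|\tau\|_2$, matching the first summand of Eq.~\ref{eq:expected_upper}. For the out-of-span piece I bound each factor by $\|\theta_0-\theta_*\|_2=\epsilon$ and $\|\theta-\theta_*\|_2$ respectively, then use the triangle inequality $\|\theta-\theta_*\|_2\leq \|\theta-\theta_0\|_2+\epsilon$ to reintroduce $\|\theta-\theta_0\|_2$ (the quantity actually controlled by TPGM's projection). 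The $\epsilon$ terms combine as $(1-\alpha)\epsilon+\alpha\epsilon=\epsilon$, leaving the coefficient $(\epsilon+\alpha\|\theta-\theta_0\|_2)\|\tau_\bot\|_2$ that matches the second summand.

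Finally I take expectations over $x$, pushing $\mathbb{E}[\cdot]$ through the linear combination and invoking the definitions $\bar\tau=\mathbb{E}[\|\tau\|_2]$ and $\bar\tau_\bot=\mathbb{E}[\|\tau_\bot\|_2]$ to obtain Eq.~\ref{eq:expected_upper}. I do not expect any single step to be technically delicate; the main conceptual obstacle is simply recognising that the over-parameterised minimizer forces $\theta-\theta_*\in\mathrm{range}(\mathbf{U}_\bot)$, because that observation is what cleanly separates the error into an in-span term (which is shrunk by $(1-\alpha)$) and an out-of-span term (which the projection cannot help and may actually enlarge through $\alpha\|\theta-\theta_0\|_2$). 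This separation is exactly what enables the subsequent discussion of why projection helps generalisation in the in-span direction while risking extrapolation error in the out-of-span direction.
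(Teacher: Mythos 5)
Your proposal is correct and follows essentially the same route as the paper's proof: the key observation that the minimizer interpolates the training data (so $(\theta-\theta_*)^T\mathbf{U}\tau=0$, the paper's Lemma~\ref{lma:lemma1}), followed by the in-span/out-span split, Cauchy--Schwarz, and the orthonormality of $\mathbf{U},\mathbf{U}_\bot$ (Lemma~\ref{lma:lemma2}). The only cosmetic difference is that you bound the out-of-span term via $\|\theta-\theta_*\|_2\leq\|\theta-\theta_0\|_2+\epsilon$ and recombine, whereas the paper regroups $\tilde\theta-\theta_*$ as $(\theta_0-\theta_*)+\alpha(\theta-\theta_0)$ before applying the triangle inequality; both yield the identical bound.
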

\looseness=-1  The upper bound in Thm.~\ref{thm:thm_1} has two components, risk due to components in the training data span (in-span) and risk due to components in the complementary subspace (out-span). To minimize the expected loss, one will expect $\alpha$ to be dependent on the value of $\epsilon$. Recall that the quantity $\epsilon$ is the distance between the pre-trained model and the ground truth model and can be viewed as a measure of how ``good" the pre-trained model is. {\color{black}Therefore, we expect two types of behaviors from $\alpha$ depending on $\epsilon$: 
\begin{itemize}
    \item \textbf{When $\epsilon$ is small, $\alpha$ needs to be smaller} to minimize the second component, meaning stronger projection.
    \item \textbf{When $\epsilon$ is large}, \textbf{$\alpha$ needs to be larger} to minimize the first component, meaning weaker projection.
\end{itemize}
}

\looseness=-1 {\color{black} The theorem indicates that if we optimize the projected model $\tilde{\theta}$ on a \textit{separate} batch of data, different from the data the model gradients are calculated on, the projection ratio $\alpha$ will seek to balance between fitting the training data (in-span) and generalizing to new data (out-span). For example, when $\epsilon$ is small, i.e., the pre-trained model is close to the optimal model, the formulation encourages stronger projection. 
} 

Furthermore, as prior works have found that lower layers tend to learn more general features while higher layers specialize to a specific dataset, $\epsilon$ is likely to be smaller for the lower layers and larger for higher layers because pre-trained models likely have learned very good low-level general features~\cite{neyshabur2020being,raghu2019transfusion,yosinski2014transferable,wortsman2022robust}. This offers one explanation of why TPGM automatically learns different constraints for each layer. Therefore, we hypothesize that optimizing the projection radii on a dataset sampled \textit{separately} from the training data, e.g., the validation dataset, is essential to learning different constraints for each layer.

\section{Experiments}
\textbf{Overview.} To validate TPGM, we conduct experiments on large-scale datasets using different architectures. The experiments are split into two sections depending on the specific architecture used. In Sec.~\ref{sec:resnet_exp}, we use ResNet~\cite{he2016deep} with a \textit{CLIP pre-trained ResNet50}~\cite{radford2021learning} and an \textit{ImageNet pre-trained MOCO-V3 ResNet50\cite{chen2021empirical}} as the pre-trained models. In Sec.~\ref{sec:transformer_exp}, we use Vision Transformers~\cite{vaswani2017attention} with a \textit{CLIP pre-trained ViT-B model}~\cite{radford2021learning}.


\textbf{Datasets.} For the ResNet experiments, we use DomainNet~\cite{peng2019moment} (0.6M images over 345 classes) as the benchmark. DomainNet has five domains: real, sketch, painting, inforgraph, and clipart. We use the real domain as the ID fine-tuning domain (with held-out test data to test ID performance) and the rest as the OOD domains. For the Transformer experiments, we use ImageNet-1K\cite{deng2009imagenet} as the fine-tuning dataset. For the ID test dataset, we add ImageNet-V2~\cite{recht2019imagenet} in addition to ImageNet-1K.  For the OOD test datasets, we use ImageNet-A~\cite{hendrycks2021natural}, ImageNet-R~\cite{hendrycks2021many}, and ImageNet-S~\cite{wang2019learning}. No OOD data are used during training.

\subsection{Fine-Tuning a Pre-trained ResNet}
\label{sec:resnet_exp}
\begin{table*}[t]
\caption{\textbf{DomainNet Results using CLIP pre-trained ResNet50 with 100$\%$ Real Data.} TPGM improves OOD performance significantly even when a zero-shot classifier is not available.
}
\centering

\resizebox{0.85\linewidth}{!}{
\begin{tabular}{c|c|cccc|ccc} 

\toprule
&\multicolumn{1}{c|}{ID} & \multicolumn{4}{c|}{OOD} & \multicolumn{3}{c}{Statistics} \\

&Real & Sketch & Painting & Infograph & Clipart &  OOD Avg. & ID $\Delta$ ($\%$) & OOD $\Delta$ ($\%$)\\
\midrule
Vanilla FT & 80.93 \color{Gray}(0.08) & 31.81 \color{Gray}(0.06) & 41.02 (0.10)  & 20.29 \color{Gray}(0.08) & 43.59 \color{Gray}(0.15) & 34.18 & 0.00 & 0.00 \\
LP &52.56 \color{Gray}(0.09)&	20.05 \color{Gray}(0.21)&	24.92 \color{Gray}(2.49)&	19.18 \color{Gray}(0.46)&	21.15 \color{Gray}(0.18)&	21.33&	\color{red}-35.05&	\color{red}-37.60\\
PF~\cite{kanavati2021partial} &78.27 \color{Gray}(0.11) &	36.77 \color{Gray}(0.32)&	42.13 \color{Gray}(0.35)&	24.71 \color{Gray}(0.18)&	43.31 \color{Gray}(0.53)&	36.73&	\color{red}-3.29&	\color{Green}7.46\\
L2-SP~\cite{xuhong2018explicit}& 82.07 \color{Gray}(0.09)&	36.67 \color{Gray}(0.11)&	\textbf{45.62} \color{Gray}(0.35)&	22.97 \color{Gray}(0.42)&	47.78 \color{Gray}(0.30)&	38.26&	\color{Green}1.40&	\color{Green}11.94\\
MARS-SP~\cite{gouk2020distance}& 77.19 \color{Gray}(0.63)&	25.33 \color{Gray}(1.07)&	33.43 \color{Gray}(2.06)&	14.81 \color{Gray}(0.43)&	39.20 \color{Gray}(0.74)&	28.19&	\color{red}-4.62&	\color{red}-17.53\\
LP-FT~\cite{kumar2022fine}& 80.82 \color{Gray}(0.95)& 	34.85 \color{Gray}(1.93)& 	44.03 \color{Gray}(0.05)& 	22.23 \color{Gray}(2.01)& 	46.13 \color{Gray}(2.34)& 36.81& 		\color{red}-0.14& 	\color{Green}7.69\\
\midrule
TPGM &\textbf{83.64} \color{Gray}(0.01)&	\textbf{38.78} \color{Gray}(0.42)&	43.11 \color{Gray}(0.25)&	\textbf{28.70} \color{Gray}(0.31)&	\textbf{48.01} \color{Gray}(0.25)&	\textbf{39.65}&	\color{Green}\textbf{3.34}&	\color{Green}\textbf{16.01}\\
\bottomrule
\end{tabular}
}
\label{tab:clip_resnet}
\end{table*}
\begin{table*}[t]
\caption{\textbf{DomainNet Results using MOCO-V3 pre-trained ResNet50 Results with 100$\%$ Real Data.} TPGM improves OOD generalization using a self-supervised pre-trained model while improving ID performance. 
}
\centering

\resizebox{0.85\linewidth}{!}{
\begin{tabular}{c|c|cccc|ccc} 

\toprule
&\multicolumn{1}{c|}{ID} & \multicolumn{4}{c|}{OOD} & \multicolumn{3}{c}{Statistics} \\

&Real & Sketch & Painting & Infograph & Clipart &  OOD Avg. & ID $\Delta$ ($\%$) & OOD $\Delta$ ($\%$)\\
\midrule
Vanilla FT & 81.99 \color{Gray}(0.03)&	31.52 \color{Gray}(0.33)&	42.89 \color{Gray}(0.53)&	18.51 \color{Gray}(0.28)&	44.98 \color{Gray}(0.24)&	34.47&	0.00&	0.00 \\
LP & 73.01 \color{Gray}(0.03)&	24.10 \color{Gray}(0.23)&	39.56 \color{Gray}(0.15)&	12.27 \color{Gray}(0.02)&	30.38 (\color{Gray}0.08)&	26.58&	\color{red}-10.96&	\color{red}-22.90\\
PF~\cite{kanavati2021partial} & 78.27 \color{Gray}(0.03)&	27.72 \color{Gray}(0.07)&	39.74 \color{Gray}(0.12)&	15.56 \color{Gray}(0.08)&	38.18 \color{Gray}(0.12)&	30.30&	\color{red}-4.55&	\color{red}-12.11\\
L2-SP~\cite{xuhong2018explicit}& 81.51 \color{Gray}(0.02)&	34.91 \color{Gray}(0.22)&	45.76 \color{Gray}(0.16)&	18.97 \color{Gray}(0.11)&	45.29 \color{Gray}(0.18)&	36.23&	\color{red}-0.59&	\color{Green}5.09 \\
MARS-SP~\cite{gouk2020distance}& 81.89 \color{Gray}(0.01)&	34.44 \color{Gray}(2.54)&	45.05 \color{Gray}(1.91)&	19.97 \color{Gray}(1.48)&	46.36 \color{Gray}(1.29)&	36.45&	\color{red}-0.13&	\color{Green}5.74\\
LP-FT~\cite{kumar2022fine}& \textbf{82.92} \color{Gray}(0.01)& 34.50 \color{Gray}(0.22)&	45.42 \color{Gray}(0.31)&	\textbf{20.12} \color{Gray}(0.43)&	\textbf{47.11} \color{Gray}(0.27)&	36.79&	\color{Green}\textbf{1.13}&	\color{Green}6.72\\
\midrule
TPGM & 82.66 \color{Gray}(0.13)&	\textbf{35.35} \color{Gray}(0.33)&	\textbf{46.20} \color{Gray}(0.20)&	\textbf{20.13 }\color{Gray}(0.12)&	45.75 \color{Gray}(0.12)&	\textbf{36.86}&	\color{Green}0.82&	\color{Green}\textbf{6.91}\\
\bottomrule
\end{tabular}
}
\label{tab:moco_resnet}
\end{table*}
\begin{table*}[t]
\caption{\textbf{DomainNet Results using CLIP pre-trained ResNet50 with 10$\%$ Real Data.} TPGM adjusts to the size of the fine-tuning dataset by imposing stronger per-layer constraints.  
}
\centering

\resizebox{0.85\linewidth}{!}{
\begin{tabular}{c|c|cccc|ccc} 

\toprule
&\multicolumn{1}{c|}{ID} & \multicolumn{4}{c|}{OOD} & \multicolumn{3}{c}{Statistics} \\

&Real & Sketch & Painting & Infograph & Clipart &  OOD Avg. & ID $\Delta$ ($\%$) & OOD $\Delta$ ($\%$)\\
\midrule
Vanilla FT & 57.35 \color{Gray}(1.43)&	17.48 \color{Gray}(0.68)&	25.60 \color{Gray}(0.70)&	10.30 \color{Gray}(1.57) &	23.01 \color{Gray}(0.65)&	19.10&	0.00&	0.00 \\

LP & 47.19 \color{Gray}(0.93)&	17.81 \color{Gray}(0.25)&	22.71 \color{Gray}(2.08)&	17.13 \color{Gray}(0.75)&	17.59 \color{Gray}(0.69)&	18.81&	\color{red}-17.71&	\color{red}-1.52\\

PF~\cite{kanavati2021partial} & 71.04 \color{Gray}(0.91)&	27.87 \color{Gray}(1.04)&	\textbf{38.31} \color{Gray}(1.05)&	\textbf{19.85} \color{Gray}(0.70)&	33.92 (\color{Gray}1.53)&	29.99&	\color{Green}23.86&	\color{Green}57.01 \\

L2-SP~\cite{xuhong2018explicit}& 61.41 \color{Gray}(0.92)&	22.61 \color{Gray}(0.52)&	30.48 \color{Gray}(0.42)&	12.28 \color{Gray}(0.50)&	26.59 \color{Gray}(0.57)&	22.99&	\color{Green}7.08&	\color{Green}20.37\\
MARS-SP~\cite{gouk2020distance}& 52.53 \color{Gray}(0.84)&	15.34 \color{Gray}(0.54)&	21.57 \color{Gray}(0.45)&	8.49 \color{Gray}(0.60)&	19.96 \color{Gray}(0.01)&	16.34&	\color{red}-8.41&	\color{red}-14.44 \\

LP-FT~\cite{kumar2022fine}& 64.11 \color{Gray}(0.78)&	20.54 \color{Gray}(0.27)&	30.89 \color{Gray}(0.41)&	13.58 \color{Gray}(0.63)&	29.55 \color{Gray}(0.82)&	23.64&	\color{Green}11.78&	\color{Green}23.77 \\
\midrule
TPGM & \textbf{73.16} \color{Gray}(1.27)&	\textbf{29.88} \color{Gray}(0.81)&	36.80 \color{Gray}(1.42)&	19.72 \color{Gray}(0.12)&	\textbf{35.28} \color{Gray}(0.74)&	\textbf{30.42}&	\color{Green}\textbf{27.56}&	\color{Green}\textbf{59.27}\\
\bottomrule
\end{tabular}
}
\label{tab:clip_resnet_10}
\end{table*}

In this section, we compare TPGM to several existing methods using a CLIP pre-trained ResNet50~\cite{radford2021learning} and ImageNet pre-trained MOCO-V3 ResNet50~\cite{chen2021empirical} as initialization. Specifically for TPGM, we use $f_{proj} = 1$, $T_{proj} =1$, meaning that \textit{projection update} and \textit{projection} are activated at every gradient descent step (Alg.~\ref{alg:alg1}). We also use the MARS projection in Eq.~\ref{eq:mars_proj} because we found that MARS projection performs better than L2 projection in this setting (Appendix~\ref{sec:compare_proj}). Moreover, we do not include WISE~\cite{wortsman2022robust} in this comparison because we found that CLIP pre-trained ResNet50 has poor linear connectivity, i.e., linear interpolation results in drastic degradation of performance (Appendix~\ref{sec:resnet_wise}). Therefore, we do not use any zero-shot classifiers for initializing the last linear layer (See sec.~\ref{sec:transformer_exp} for a detailed description of zero-shot classifiers). The recipe for training ResNet is relatively simple. We use the Adam optimizer~\cite{kingma2014adam} with default settings and a batch size of 256. Models are fine-tuned for 50 epochs with a cosine learning rate schedule. The same training recipe is used for all experiments unless otherwise specified. Implementation details are provided in Appendix~\ref{sec:implementation}.

\textbf{TPGM improves OOD robustness without sacrificing ID performance.} We first present the main results on DomainNet using CLIP pre-trained ResNe50. As shown in Tab.~\ref{tab:clip_resnet}, we observe that both L2-SP and LP-FT bring significant improvements to OOD generalization with respect to vanilla FT while matching or surpassing its ID accuracy.  Nevertheless, TPGM brings the most OOD improvement while also surpassing vanilla FT on ID accuracy. We also report results using MOCO-V3 in Tab.~\ref{tab:moco_resnet}. MOCO-V3 is pre-trained on ImageNet-1K (1.2M) consisting of mainly real images, a much smaller and less diverse pre-training data set than CLIP's. Therefore, we see worse OOD generalization results from all methods, compared to using CLIP pre-trained models (Tab.~\ref{tab:clip_resnet}). This indicates that the size and diversity of the pre-training dataset have a huge impact on generalization. Nevertheless, TPGM~\footnote{ TPGM on MocoV3 is the only situation where we found total variation smoothing (see Appendix~\ref{sec:smoothing}) helps with ID performance. Without smoothing, TPGM achieves 81.66 ID and 37.27 Ave. OOD performance.} yields the best OOD performance while matching the best ID performance.


\begin{figure}
    \centering
 \includegraphics[width=0.4\textwidth]{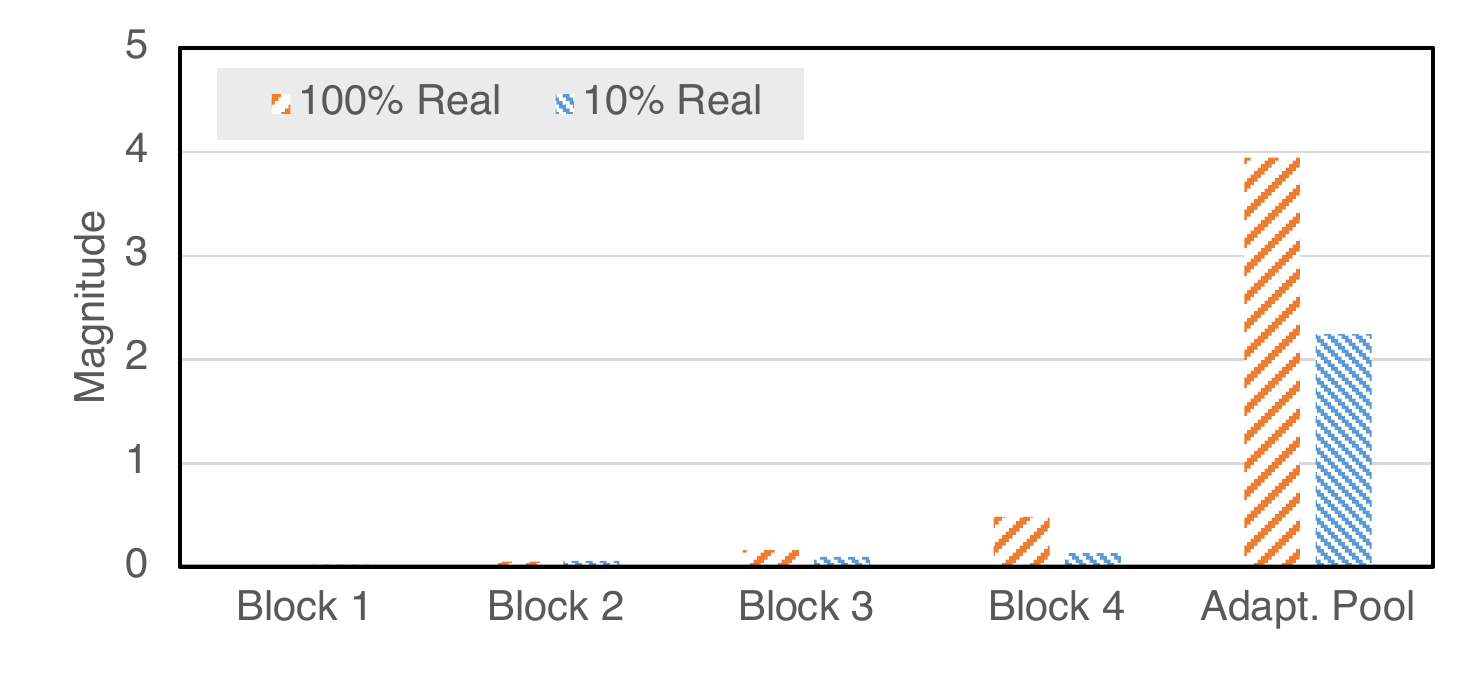}
 \caption{\textbf{Average distance between the fine-tuned model and a CLIP pre-trained ResNet50 for each Residual block using TPGM.} Under the distance constraints imposed by TPGM, most of the model changes are in the last adaptive pooling layer. }
 \label{fig:per_layer_para_resnet}
 \vspace{-3mm}
\end{figure}
\textbf{TPGM adjusts to the size of training data.} As an \textit{automatic} regularization method, TPGM also needs to adjust to different regularization strengths according to the size of the training set. TPGM can avoid over-fitting to a small fine-tuning dataset through the outer minimization loop of the projection parameters on validation data (Eq.~\ref{eq:constrained_min}). In this section, we additionally present results when we reduce the DomainNet-Real data to 10$\%$ of its original size. We follow a similar strategy as in the $100\%$ experiments and sweep different learning rates (for competing methods we sweep their hyperparameters). All models are trained for 150 epochs. In Tab.~\ref{tab:clip_resnet_10}, we observe significant degradation in ID performance across all methods except for PF and TPGM. PF only trains the Batch-norm layers and therefore is less prone to over-fitting. TPGM achieves an even higher ID performance because it learns small projection radii, which project the fine-tuned model closer to the pre-trained model. To see it, we visualize the average distance between the fined-tuned model and the pre-trained model for each residual block using TPGM for both 100$\%$ and $10\%$ data in Fig.~\ref{fig:per_layer_para_resnet}. We observe that 1) lower layers have smaller constraints and higher layers have larger constraints, meaning more freedom to grow, and 2) with only $10\%$ training data, the learned constraints are much smaller than those trained with 100$\%$. 
This behavior explains why TPGM maintains high ID performance and avoids over-fitting with fewer training data because it chooses to rely more on the pre-trained model by enforcing stronger projection.

\subsection{Fine-tuning a Pre-Trained Transformer}
\label{sec:transformer_exp}
 In this section, we compare to existing fine-tuning methods using a CLIP pre-trained ViT-B model. We initialize all models with the pre-trained weights and also the last linear classifier layer with a \textit{zero-shot classifier} extracted from the CLIP text-encoder. Specifically, for TPGM, we use $f_{proj} = T-1$ and $T_{proj} =200$, meaning that projection only happens \textit{once} at the end of fine-tuning. This is possible because CLIP pre-trained ViT-B has been shown to have good linear connectivity~\cite{wortsman2022robust} in contrast to the CLIP pre-trained ResNet (Appendix~\ref{sec:resnet_wise}). Furthermore, we use L2 projection in Eq.~\ref{eq:l2_proj} because we found L2 projection is better than MARS projection in this setting (Appendix~\ref{sec:compare_proj}). Training Transformers~\cite{vaswani2017attention} requires careful tuning of the training recipe to achieve the best results\footnote{Our training recipe yields 84.20 vanilla FT accuracy on ImageNet using a CLIP ViT-B, which is significantly better than prior works, e.g., WISE~\cite{wortsman2022robust} reported 81.3, FT-LP~\cite{radford2021learning} reported 81.7 on the same dataset.}. We follow some common practices in prior works~\cite{touvron2021training,touvron2022deit} to boost performance. We leave implementation details in Appendix~\ref{sec:implementation}.  

\textbf{Extracting a Zero-Shot Classifier.} CLIP has an image-encoder $g(\cdot)$ and a text-encoder $h(\cdot)$, and is capable of zero-shot classification. For example, given an image $x$ and its label space $y\in\mathcal{Y}=\{y_1,...,y_c\}$, zero-shot classification can be done by first inserting the class name $y_i$, e.g., "apple", into a template $c_i$, e.g., "a photo of $\{$apple$\}$" and extracting its text embedding $h(c_i)$, and then computing an inner product,$\langle h(c_i), g(x)\rangle$, between the text embedding and the corresponding image embedding. The maximum value of the inner product over all classes determines the membership of the input. Following the prior work~\cite{wortsman2022robust}, one can stack $h(c_i),\quad \forall i\in\{1,...,c\}$ into a weight matrix $\mathbf{W}_{\text{zero-shot}}$ as a zero-shot classification layer. We use this weight matrix as initialization as well as zero-shot classification.



\begin{table*}[t]
\caption{\textbf{ImageNet Results using CLIP pre-trained ViT-B.} TPGM improves OOD performance significantly without losing ID performance. TPGM-C achieves the best OOD performance while maintaining a more competitive ID performance compared to the current state-of-the-art method WISE. TPGM-C is a controlled variant of TPGM, designed to lower its ID performance to the same level as WISE for a fair comparison of OOD performance. Note that prior works~\cite{wortsman2022robust} \textit{sub-sample} classes for ImageNet-A/R (200 classes) for evaluation while we do not. 
}
\centering

\resizebox{0.85\linewidth}{!}{
\begin{tabular}{c|cc|ccc|cccc} 

\toprule
&\multicolumn{2}{c|}{ID} & \multicolumn{3}{c|}{OOD} & \multicolumn{4}{c}{Statistics} \\

&ImageNet & ImageNet-V2 & ImageNet-A & ImageNet-R & ImageNet-S & ID Avg. & OOD Avg. & ID $\Delta$ ($\%$) & OOD $\Delta$ ($\%$)\\
\midrule
Vanilla FT &\textbf{84.20} \color{Gray}(0.02) &75.08 \color{Gray}(0.11)	&26.52 \color{Gray}(0.12)	&46.45 \color{Gray}(0.06)	&48.90 \color{Gray}(0.58) &{79.64}	&40.63 & 0.00 & 0.00\\
LP &77.99 \color{Gray}(0.02)&	67.74 \color{Gray}(0.04)&	27.13 \color{Gray}(0.06)&	50.71 \color{Gray}(0.07)&	46.47 \color{Gray}(0.04)&	72.86&	41.44&	\color{red}-8.51&	\color{Green}2.00\\
BitFit~\cite{zaken2021bitfit} &78.02 \color{Gray}(0.12)&	67.69 \color{Gray}(0.15)&	27.19 \color{Gray}(0.28)&	50.66 \color{Gray}(0.31)&	46.50 \color{Gray}(0.29)&	72.85&	41.45&	\color{red}-8.42&	\color{Green}2.45\\
L2-SP~\cite{xuhong2018explicit} &84.10 \color{Gray}(0.02)&	75.05 \color{Gray}(0.11)&	26.19 \color{Gray}(0.45)&	46.58 \color{Gray}(0.09)&	48.51 \color{Gray}(0.12)&	79.58&	40.43&	\color{red}-0.08&	\color{red}-0.49\\
LP-FT~\cite{kumar2022fine} &83.50 \color{Gray}(0.15)&	73.95 \color{Gray}(0.12)&	25.62 \color{Gray}(0.23)&	46.21 \color{Gray}(0.22)&	48.83 \color{Gray}(0.19)&	78.73&	40.22&	\color{red}-1.15&	\color{red}-1.00\\
Zero-Shot~\cite{radford2021learning} &67.68 \color{Gray}(N/A)	&61.41 \color{Gray}(N/A)	&30.60 (\color{Gray}N/A)	&56.77 \color{Gray}(N/A)	&45.53 \color{Gray}(N/A)	&64.54 	&44.30 	&\color{red}-18.91	&\color{Green}8.64\\
WISE~\cite{wortsman2022robust} &82.11 \color{Gray}(0.14)&	73.61 \color{Gray}(0.13)&	36.11 \color{Gray}(0.16)&	61.77 \color{Gray}(0.08)&	54.16 \color{Gray}(0.07)&	77.86&	50.68&	\color{red}-2.23&	\color{Green}24.75\\
\midrule
TPGM-C &82.41 \color{Gray}(0.07)&	73.91 \color{Gray}(0.21)&	\textbf{36.79} \color{Gray}(0.14)&	\textbf{62.48} \color{Gray}(0.10)&	\textbf{54.91} \color{Gray}(0.12)&	78.16&	\textbf{51.39}&	\color{red}-1.86&	\color{Green}\textbf{26.51}\\

TPGM & 84.19 \color{Gray}(0.03)&	\textbf{75.41} \color{Gray}(1.61)&	34.29 \color{Gray}(2.11)&	57.19 \color{Gray}(0.54)&	54.38 \color{Gray}(0.19)&	\textbf{79.80}&	48.62&	\color{Green}\textbf{0.20}&	\color{Green}19.69\\

\bottomrule
\end{tabular}
}
\label{tab:imagenet}
\end{table*}

\textbf{TPGM Improves OOD robustness without sacrificing ID performance.} Now, we present the main benchmark results, accuracy on each of the datasets, and percentage of improvement with respect to the vanilla FT method, in Tab.~\ref{tab:imagenet}. Parameter-efficient methods such as LP and BitFit all improve OOD generalization however at a loss of ID performance. We hypothesize that they help preserve generalization by updating fewer parameters in the network, and therefore maintaining a closer distance to the pre-trained model. On the other hand, the restriction on the function space can result in under-fitting, manifested in lower ID performance. Surprisingly, L2-SP and LP-FT fail to improve either ID or OOD performance. We think this is because the added regularization in L2-SP and the two-stage training procedure in LP-FT are not very compatible with the existing Transformer training recipe. The zero-shot classifier brings significant OOD improvement even though the ID performance is way worse than the FT model. This confirms that CLIP models acquire great generalization capability during pre-training, as also reported by the original paper~\cite{radford2021learning}. TPGM and WISE perform notably better than other methods. We will elaborate on the comparison next. 

\textbf{TPGM outperforms WISE.} The current state-of-the-art method for fine-tuning a pre-trained model with linear connectivity is WISE~\cite{wortsman2022robust}, which linearly interpolates between a fine-tuned model and the pre-trained model with a \textit{single} ratio. For lack of a better heuristic, the paper suggests 0.5 as the interpolation ratio and leaves the research for a better method to determine the mixing ratio as an open question. The comparison between TPGM and WISE comes down to the comparison between optimized per-layer constraints and a hand-tuned single constraint. Therefore, for WISE, we sweep different ratios from 0.1 to 0.9, controlling the distance to the pre-trained model from close to far. For TPGM, to fairly compare to WISE, we put an L2 regularization on the magnitude of the trainable projection parameters with a hyperparameter $\mu$ that controls the strength of regularization. Intuitively, a larger regularization forces the projection radii to be smaller, meaning projecting the fine-tuned model closer to the pre-trained model. We sweep a range of different $\mu$ from $4e^{-3}$ to $0.0$. We refer to this variant as TPGM-C (C for \textbf{c}ontrolled). Note that this L2 regularization is not a hyper-parameter in the algorithm itself.  In Fig.~\ref{fig:tpgm_wise}, we observe a trade-off between the ID performance and the OOD performance for both methods. However, TPGM  clearly outperforms WISE because for the same ID performance, TPGM has better OOD performance and for the same OOD performance, TPGM has better ID performance. This demonstrates the benefits of maintaining per-layer constraints over a single interpolation ratio. We also provide the same experiment and visualization using a CLIP pre-trained ViT-L in Appendix~\ref{sec:vit_l}.

\begin{figure}
     \centering
     \includegraphics[width=0.45\textwidth]{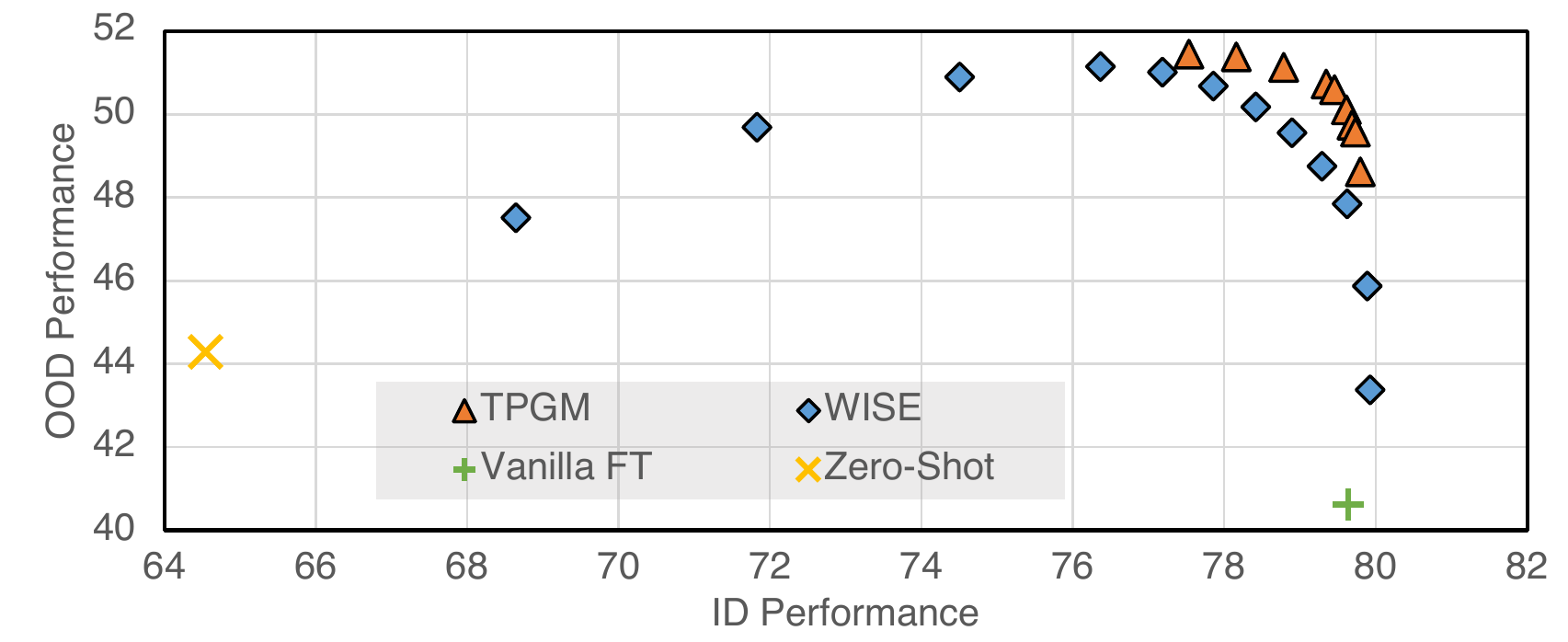}
     \caption{\textbf{ID and OOD performance of TPGM and WISE with different hyperparameters using CLIP pre-trained ViT-B, fine-tuned on ImageNet.} Sweeping different hyperparameters for both WISE and TPGM shows that learning per-layer constraints is superior to learning a single constraint.
   }
     \label{fig:tpgm_wise}
      \vspace{-4mm}
\end{figure}
\begin{figure}
     \centering
     \includegraphics[width=0.45\textwidth]{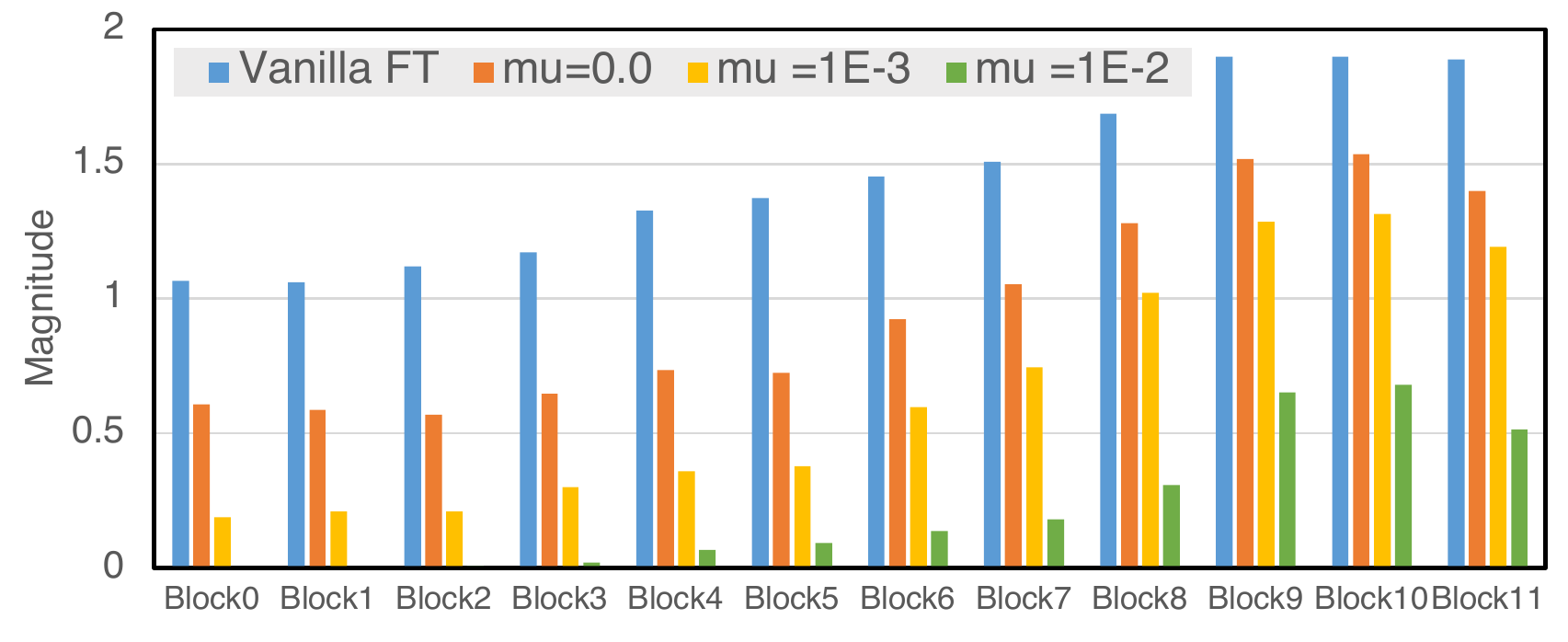}
     \caption{\textbf{Average distance between the fine-tuned model and a CLIP pre-trained ViT-B for each block using TPGM.} Compared to the original distance learned by Vanilla FT, TPGM more aggressively constrains the distance of lower layers.}
     \label{fig:per_layer_para}
     \vspace{-3mm}
\end{figure}

\textbf{Different layers require different regularization.} Now we take a closer look at the learned TPGM projection radii especially in terms of ``closeness'' to the pre-trained model.  In Fig.~\ref{fig:per_layer_para}, we visualize the average distance from the pre-trained model for each transformer block with three different L2 regularization strengths. We observe that 1) lower layers have smaller projection radii, i.e., they are more tightly constrained whereas higher layers have larger projection radii and therefore more freedom to grow; 2) as the regularization strength on projection radii increases, on average, they become closer to the pre-trained model while still following the previous observation.  Combined with the common belief that lower layers learn more general features and higher layers learn more specialized layers, we hypothesize that lower layers of the pre-trained model are ``closer'' to the ideal model than higher layers. This observation corroborates with our theoretical analysis (Sec.~\ref{sec:theory}) that when the distance between the pre-trained model and the ideal model is small, TPGM favors close projection.

\section{Conclusion}
\looseness=-1  Proposing a bi-level constrained minimization formulation of fine-tuning, we develop the  trainable projected gradient method (TPGM) to learn a distance constraint for each layer of a neural network for robust fine-tuning, which has not been possible with manual hyper-parameter tuning. Our thorough experiments across several pre-trained models and ID/OOD datasets show that TPGM can better preserve the OOD generalization capability of the pre-trained model with minimal effects on ID performance. The optimized constraints exhibit highly interpretable patterns which corroborate existing findings and strengthen the motivation for per-layer constraints.

\paragraph{Acknowledgements.} This work was partially supported by Meta, ONR grant N00014-18-1-2829, and GTRI.


{\small
\bibliographystyle{ieee_fullname}
\bibliography{cvpr}
}

\clearpage
\subsection{Appendix}

\subsection{Proof of Theorem 1}
\label{sec:proof}
We provide complete proof of the main theorem. We will first reiterate the notations used in the main paper. 

\textbf{Problem Setup.}  Let $x\in \mathbb{R}^{d}$ denote an ID data and the corresponding label $y$ is generated by a \textit{ground truth} linear model $\theta_*\in \mathbb{R}^d $, i.e., $y=\theta_*^Tx$.  To construct the training set, we sample $n$ training data, where $n<d$, and stack the sampled data into a data matrix $\mathbf{X}_{tr}\in\mathbb{R}^{d\times n}$. Accordingly, the labels form a vector $\mathbf{Y}_{tr}=\mathbf{X}_{tr}^T\theta_*\in\mathbb{R}^{n}$. The 
\textit{training} goal is to minimize the \textit{empirical} loss.
\begin{align}
    \label{eq:training_obj_ap}
\mathcal{L}(\mathbf{X}_{tr},\mathbf{Y}_{tr};\theta)=\|\mathbf{X}_{tr}^T\theta-\mathbf{Y}_{tr}\|_2
\end{align}

Note that this forms an \textit{over-parameterized} linear system, i.e., there are more parameters than equations, because $n<d$. This is similar to how modern neural networks are over-parameterized with respect to the data.

\textbf{Complementary Decomposition using SVD.} For the analysis, we make an independence assumption on the data matrix $\mathbf{X}_{tr}$. This assumption exists for notation simplicity and can be relaxed easily.
\begin{assumption}
\label{assmp:independence_ap}
Let the $n$ training data be linearly independent. The following SVD exists for the data matrix $\mathbf{X}_{tr}$.
\begin{align*}
\mathbf{X}_{tr} = \mathbf{U}\mathbf{\Sigma} \mathbf{V}^T,\quad \mathbf{U}\in\mathbb{R}^{d\times n},\mathbf{\Sigma}\in\mathbb{R}^{n\times n}, \mathbf{V}\in\mathbb{R}^{n\times n}.
\end{align*}
\end{assumption}

Consequently, we can decompose any vector $x\in\mathbb{R}^{d}$ into two components, $x=\mathbf{U}\tau + \mathbf{U}_{\bot}\tau_{\bot}$, where $\mathbf{U}$ is the basis for the span of training samples, $\mathbf{U}_{\bot}\in\mathbb{R}^{d\times (d-n)}$ is the basis for the complementary subspace, and $\tau\in\mathbb{R}^{n}$, $\tau_{\bot}\in\mathbb{R}^{d-n}$ are the corresponding coordinates. There are infinitely many solutions to Eq.~\ref{eq:training_obj_ap} because this is an over-parameterized system.  The classic result states that,
\begin{align}
\label{eq:least_sqr}
    \theta = \mathbf{U}\mathbf{\Sigma}^{-1}\mathbf{V}^T\mathbf{Y}_{tr}+\mathbf{U}_\bot\beta_\bot,
\end{align}
where $\beta_\bot\in\mathbb{R}^{d-n}$ can be any vector. We denote a \textit{projected model} as $\Tilde{\theta} = \theta_0 + \alpha (\theta - \theta_0)$ (obtained using Eq.~\ref{eq:l2_proj} or Eq.~\ref{eq:mars_proj}), where  $\theta$ is one minimizer of Eq.~\ref{eq:training_obj_ap}, $\theta_0$ is the pre-trained model and $0\leq\alpha\leq1$ is the projection ratio. 

To quantify the effects of projection $\alpha$, we can look at the average performance of the projected model $\Tilde{\theta}$ on test data. Consequently, we investigate the \textit{expected} loss of the projected model over the entire data space. 
\begin{align}
    \label{eq:expected_obj_ap}
    \mathbb{E}[\mathcal{L}(x,y;\Tilde{\theta})]= \mathbb{E}\left[\left\|{\Tilde{\theta}}^Tx-y \right\|_2\right]
\end{align}
We now provide a detailed proof of Theorem~\ref{thm:thm_1} in the main paper.
{\color{black}We first prove two lemmas.
\begin{lemma}
\label{lma:lemma1}
$\|(\theta-\theta_*)^T\mathbf{U}\tau\|_2 = 0$.
\end{lemma}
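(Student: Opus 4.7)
The plan is to use the defining property of $\theta$ as a minimizer of the empirical loss together with the SVD of $\mathbf{X}_{tr}$ to show that $\theta - \theta_*$ is orthogonal to the range of $\mathbf{U}$, which immediately kills the expression.

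First I would record that because $\theta$ achieves the minimum of $\mathcal{L}(\mathbf{X}_{tr},\mathbf{Y}_{tr};\theta)$ and the system is consistent (one solution is $\theta_*$ itself, since $\mathbf{Y}_{tr} = \mathbf{X}_{tr}^T\theta_*$ by construction), the minimum value is zero and any minimizer must satisfy $\mathbf{X}_{tr}^T\theta = \mathbf{Y}_{tr}$. Subtracting the two identities $\mathbf{X}_{tr}^T\theta = \mathbf{Y}_{tr}$ and $\mathbf{X}_{tr}^T\theta_* = \mathbf{Y}_{tr}$ yields the key equation $\mathbf{X}_{tr}^T(\theta - \theta_*) = 0$.

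Next I would plug in the SVD from Assumption~\ref{assmp:independence_ap}, rewriting the equation as $\mathbf{V}\mathbf{\Sigma}\mathbf{U}^T(\theta - \theta_*) = 0$. Because the $n$ training samples are assumed linearly independent, $\mathbf{\Sigma} \in \mathbb{R}^{n\times n}$ is invertible and $\mathbf{V}\in\mathbb{R}^{n\times n}$ is orthogonal. Left-multiplying by $\mathbf{\Sigma}^{-1}\mathbf{V}^T$ gives $\mathbf{U}^T(\theta - \theta_*) = 0$, i.e., $(\theta-\theta_*)$ lies in the orthogonal complement of the column span of $\mathbf{U}$. Transposing and multiplying on the right by any $\tau \in \mathbb{R}^n$ yields $(\theta - \theta_*)^T\mathbf{U}\tau = 0$, so its $\ell_2$ norm is zero, concluding the lemma.

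There is no real obstacle here: the argument is a one-line consequence of the consistency of the training system and the invertibility of $\mathbf{\Sigma}$ and $\mathbf{V}$ under the independence assumption. The only point worth stating carefully is why $\mathbf{X}_{tr}^T\theta = \mathbf{Y}_{tr}$ for \emph{any} minimizer (as opposed to just some minimizer) — this is where the fact that the problem is over-parameterized and exactly realizable by $\theta_*$ is used.
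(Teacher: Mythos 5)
Your proof is correct. It reaches the same conclusion as the paper but by a slightly different route: the paper substitutes the explicit closed-form parametrization of the minimizers, $\theta = \mathbf{U}\mathbf{\Sigma}^{-1}\mathbf{V}^T\mathbf{Y}_{tr}+\mathbf{U}_\bot\beta_\bot$ (Eq.~\ref{eq:least_sqr}), into $(\theta-\theta_*)^T\mathbf{U}\tau$ and cancels terms using $\mathbf{U}^T\mathbf{U}=\mathbf{I}$ and $\mathbf{U}_\bot^T\mathbf{U}=0$, whereas you bypass that formula entirely and derive the orthogonality relation $\mathbf{U}^T(\theta-\theta_*)=0$ directly from the fact that the over-parameterized system is exactly realizable, so every minimizer interpolates the training data. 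Your argument is marginally more self-contained (it does not lean on the ``classic result'' the paper states without proof) and makes the geometric content explicit -- $\theta-\theta_*$ has no component in the span of the training data -- while the paper's computation has the side benefit of reusing a decomposition it needs anyway to describe the solution set. You are also right to flag that the only step needing care is why \emph{every} minimizer satisfies $\mathbf{X}_{tr}^T\theta=\mathbf{Y}_{tr}$; your consistency argument (the minimum value is zero because $\theta_*$ attains it) settles this cleanly, and the invertibility of $\mathbf{\Sigma}$ and orthogonality of $\mathbf{V}$ under Assumption~\ref{assmp:independence_ap} justify the cancellation.
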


\begin{proof}
To show it, we use the decomposition in Eq.~\ref{eq:least_sqr}.
\begin{align*}
    \|(\theta-\theta_*)^T\mathbf{U}\tau\|_2& = \| \mathbf{U\Sigma}^{-1}\mathbf{V}^T\mathbf{Y}_{tr}+\mathbf{U}_\bot\beta_\bot-\theta_*)^T\mathbf{U}\tau\|_2\\\nonumber
    & = \|(\mathbf{U\Sigma}^{-1}\mathbf{V}^T\mathbf{Y}_{tr}-\theta_*)^T\mathbf{U}\tau\|_2\\\nonumber
    & = \|(\mathbf{U\Sigma}^{-1}\mathbf{V}^T\mathbf{X}_{tr}^T\theta_*-\theta_*)^T\mathbf{U}\tau\|_2\\\nonumber
    & = \|(\mathbf{U\Sigma}^{-1}\mathbf{V}^T(\mathbf{ U\Sigma V}^T)^T\theta_*-\theta_*)^T\mathbf{U}\tau\|_2\\\nonumber
    &= 0 
\end{align*}
\end{proof}

\begin{lemma}
\label{lma:lemma2}
$\|\mathbf{U}\tau\|_2 \leq \|\tau\|_2$ and  $\|\mathbf{U}_\bot\tau_\bot\|_2 \leq\|\tau_\bot\|_2$.
\end{lemma}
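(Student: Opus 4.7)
The plan is to observe that both $\mathbf{U}$ and $\mathbf{U}_\bot$ have orthonormal columns, so the two bounds actually hold with equality. By Assumption~\ref{assmp:independence_ap} the $n$ training samples are linearly independent, so the (thin) SVD $\mathbf{X}_{tr}=\mathbf{U}\mathbf{\Sigma}\mathbf{V}^T$ yields $\mathbf{U}\in\mathbb{R}^{d\times n}$ whose columns form an orthonormal basis for the column space of $\mathbf{X}_{tr}$, i.e.\ $\mathbf{U}^T\mathbf{U}=\mathbf{I}_n$. By construction, $\mathbf{U}_\bot\in\mathbb{R}^{d\times(d-n)}$ is an orthonormal basis for the orthogonal complement, so likewise $\mathbf{U}_\bot^T\mathbf{U}_\bot=\mathbf{I}_{d-n}$.

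Given this, the first step is a one-line computation:
\begin{align*}
\|\mathbf{U}\tau\|_2^2 \;=\; \tau^T\mathbf{U}^T\mathbf{U}\tau \;=\; \tau^T\tau \;=\; \|\tau\|_2^2,
\end{align*}
so $\|\mathbf{U}\tau\|_2=\|\tau\|_2\leq\|\tau\|_2$. The second step is the identical computation with $\mathbf{U}_\bot$ in place of $\mathbf{U}$ and $\tau_\bot$ in place of $\tau$, yielding $\|\mathbf{U}_\bot\tau_\bot\|_2=\|\tau_\bot\|_2\leq\|\tau_\bot\|_2$.

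There is essentially no obstacle; the only thing worth being careful about is justifying orthonormality, since the statement of Assumption~\ref{assmp:independence_ap} does not spell out that the SVD factors are orthonormal. This should be addressed briefly by noting that $\mathbf{U}$ consists of left singular vectors (hence orthonormal columns even though $\mathbf{U}$ itself is not square when $n<d$), and that $\mathbf{U}_\bot$ was introduced as an orthonormal basis of the complementary subspace. After that, the two inequalities follow immediately from $\mathbf{U}^T\mathbf{U}=\mathbf{I}_n$ and $\mathbf{U}_\bot^T\mathbf{U}_\bot=\mathbf{I}_{d-n}$, so the proof fits comfortably in a few lines.
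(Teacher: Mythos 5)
Your proof is correct and rests on the same key fact as the paper's, namely that $\mathbf{U}$ and $\mathbf{U}_\bot$ have orthonormal columns so that $\mathbf{U}^T\mathbf{U}=\mathbf{I}_n$ and $\mathbf{U}_\bot^T\mathbf{U}_\bot=\mathbf{I}_{d-n}$; the paper routes this through the induced matrix norm ($\|\mathbf{U}\tau\|_2\leq\|\mathbf{U}\|_2\|\tau\|_2$ with $\|\mathbf{U}\|_2=\sigma_{max}(\mathbf{U})=1$), whereas you compute the quadratic form directly. Your version is marginally sharper in that it exhibits the bounds as equalities, but the two arguments are essentially the same.
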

\begin{proof}
    We first invoke the definition of matrix norm, 
    \begin{align*}
        \|\mathbf{U}\|_2 = \sup_{\tau\neq0} \frac{\|\mathbf{U}\tau\|_2 }{\|\tau\|_2}
    \end{align*}

From the definition, it is easy to see that 

\begin{align*}
    \|\mathbf{U}\tau\|_2 \leq \|\mathbf{U}\|_2\|\tau\|_2.
\end{align*}

Now recall that both $\mathbf{U}\in\mathbb{R}^{d\times n}$ and $\mathbf{U}_{\bot}\in\mathbb{R}^{d\times (d-n)}$ are orthonormal matrices. Therefore, using the property of L2 matrix norm,  
\begin{align*}
    \|\mathbf{U}\|_2 = \sqrt{\lambda_{max}(\mathbf{U}^T\mathbf{U})} = \sigma_{max}(\mathbf{U})=1
\end{align*}
where $\lambda_{max}(\cdot)$ and $\sigma_{max}(\cdot)$ denote the largest eigenvalue and singular value respectively. Therefore,
\begin{align*}
    \|\mathbf{U}\tau\|_2 \leq \|\tau\|_2.
\end{align*}
The same analysis extends to $\mathbf{U}_\bot,\tau_\bot$.
\end{proof}

Next, we proceed with the proof of the main theorem.
\begin{proof}
    \begin{align*}
      \mathcal{L}(x,y;\Tilde{\theta}) &= \left\|{\Tilde{\theta}}^Tx-y \right\|_2\\\nonumber
    & = \left\|(\theta_0 + \alpha (\theta - \theta_0))^Tx-\theta_*^Tx \right\|_2\\\nonumber
    & =  \|(\theta_0 + \alpha (\theta - \theta_0)-\theta_*)^T\mathbf{U}\tau +\\\nonumber
    &  (\theta_0 + \alpha (\theta - \theta_0)-\theta_*)^T\mathbf{U}_\bot\tau_\bot \|_2\\\nonumber
    &\leq  \underbrace{\| ((1-\alpha) (\theta_0-\theta_*) + \alpha({\theta}-\theta_*))^T\mathbf{U}\tau\|_2}_{A}+\\\nonumber
    &  \underbrace{\|(\theta_0-\theta_*)^T\mathbf{U}_\bot\tau_\bot\|_2}_{B}  + \underbrace{\| \alpha({\theta}-\theta_0)^T\mathbf{U}_\bot\tau_\bot\|_2}_{C}\\\nonumber
    \end{align*}
We use triangle inequality for the last inequality.
We can now bound $A$ using Lemma~\ref{lma:lemma1}, Cauchy-Schwarz inequality and Lemma~\ref{lma:lemma2} as
\begin{align*}
    \| ((1-\alpha) &(\theta_0-\theta_*) + \alpha({\theta}-\theta_*))^T\mathbf{U}\tau\|_2 \\\nonumber
    & = (1-\alpha)\| (\theta_0-\theta_*)^T\mathbf{U}\tau \|_2\\\nonumber
    & \leq (1-\alpha)\|(\theta_0-\theta_*)\|_2\|\mathbf{U}\tau\|_2\\\nonumber
    & \leq (1-\alpha)\|(\theta_0-\theta_*)\|_2\|\tau\|_2.
\end{align*}

Similarly, we can bound $B$ using Cauchy-Schwarz inequality and Lemma~\ref{lma:lemma2} as 
\begin{align*}
    \|(\theta_0-\theta_*)^T\mathbf{U}_\bot\tau_\bot\|_2 &\leq \|\theta_0-\theta_*\|_2\|\mathbf{U}_\bot\tau_\bot\|_2\\\nonumber
    & \leq \|\theta_0-\theta_*\|_2\|\tau_\bot\|_2,
\end{align*}

and bound $C$ as,
\begin{align*}
    \| \alpha({\theta}-\theta_0)^T\mathbf{U}_\bot\tau_\bot\|_2 &\leq \| \alpha({\theta}-\theta_0)\|_2\mathbf{U}_\bot\tau_\bot\|_2\\\nonumber
    & \leq  \| \alpha({\theta}-\theta_0)\|_2\|\tau_\bot\|_2.
\end{align*}
 Now, plug everything back. We arrive at the final result, 
 \begin{align*}
     \mathcal{L}(x,y;\Tilde{\theta})\leq (1-\alpha)\epsilon\|\tau\|_2 + (\epsilon + \alpha \|\theta-\theta_0\|_2)\|\tau_\bot\|_2
 \end{align*}
 where $\epsilon = \|(\theta_0-\theta_*)\|_2$.
\end{proof}
}

\subsection{Group Based Total Variation Smoothing}
\label{sec:smoothing}
Because of the iterative and incremental nature, the vanilla TPGM algorithm is a \textit{greedy} algorithm, meaning that it judges the \textit{immediate} benefit of the current updates to the model weights. If the current updates are not consistent with the validation data, they will be removed by projection, i.e., the projection radii will not increase to accommodate the new changes. Consequently, projection radii learned by TPGM could be overly \textit{conservative} and lead to underfitting because gradient updates are stochastic, whose benefits may only show up in the long run. Empirically, we found TPGM results in under-fitting in some cases, i.e., slightly lower ID performance. To mitigate this side-effect of iterative optimization, we propose a group-based total variation (TV) smoothing for the projection parameters. TV is a common technique to improve smoothness in image denoising~\cite{chen2010adaptive} and general signal processing~\cite{condat2013direct}.  We propose to utilize TV regularization to enforce a heuristic on the optimization of $\gamma$: \textit{projection ratios of layers in the same group should be similar to each other}. Specifically, modern neural network architectures such as ResNet~\cite{he2016deep} and Transformer~\cite{vaswani2017attention} are modular and stacked with groups (blocks). It is easy to identify unique groups in each architecture and assign layers to each one of them. Therefore, let $\mathcal{G}=\{g_i|i=0,...,M\}$ be the set of unique groups in a neural network. The loss function that we optimize for the projection parameters is updated as the following,
\begin{align}
    \mathcal{L}_\gamma = \mathcal{L}(x,y;\gamma_t) + \mu \sum_{g_i\in\mathcal{G}} \sum_{i\in g_i}|\alpha_i - \alpha_{i-1}|
\end{align}
where $\mu$ is a hyperparameter requiring tuning.

\subsection{Implementation}
\label{sec:implementation}

In Alg.~\ref{alg:alg2}, the \textit{projection update} function has its own optimizer. In our implementation,  we use the Adam~\cite{kingma2014adam} optimizer because of its capability of adapting learning rate. Even though this introduces other hyperparameters, we find the same set of hyperparameters worked well for all experiments. Specifically, we use the default settings and a constant base learning rate of $\zeta=1e-2$.

\textbf{ResNet experiments (Sec.~\ref{sec:resnet_exp}).} We list all the compared methods and their method-specific tuning to reproduce our results.

\begin{itemize}
    \item\textbf{ Vanilla Fine-Tuning (FT)}: We fine-tune all layers and sweep five learning rates (CLIP best $\eta_0=1e-3$, MOCO best $\eta_0=5e-2$).
    \item\textbf{ Linear Probing (LP)}: We only fine-tune the head classifier and sweep five learning rates (CLIP best $\eta_0=1e-1$, MOCO best $\eta_0=1e-1$).
    \item \textbf{Partial Fusion (PF)}~\cite{kanavati2021partial}: We fine-tune all the batch-norm layers and the head classifier, and sweep five learning rates (CLIP best $\eta_0=1e-2$, MOCO best $\eta_0=5e-2$).
    \item \textbf{L2-SP}~\cite{xuhong2018explicit}: We add L2-SP regularization, use the best-validated learning rate from FT, and sweep five three regularization hyperparameters (CLIP best $\mu:1e-2$, MOCO best $\mu:1e-3$).
    \item \textbf{MARS-SP}~\cite{gouk2020distance}: We add MARS projection (Eq~\ref{eq:mars_proj}), use the best-validated learning rate from FT, and sweep five three projection hyperparameters (CLIP best $\mu=64$, MOCO best $\mu=16$).
    \item \textbf{LP-FT}~\cite{kumar2022fine}: We first LP for 25 epochs, using the best LP learning rate, and FT for another 25 epochs, sweeping five learning rates (CLIP best $\eta_0=1e-3$, MOCO best $\eta_0=5e-2$).
    \item \textbf{TPGM}: We learn per-layer L2 projection radii incrementally, sweeping five learning rates (Eq.~\ref{eq:l2_proj}) (CLIP best $\eta_0=1e-2$, MOCO best w/o smoothing $\eta_0=1e-2$, MOCO w/ smoothing best :$\eta_0=4e-2$ and $\mu=0.1$).
\end{itemize}

\textbf{Transformer Experiments (Sec.~\ref{sec:transformer_exp}).} We follow some common practices used in prior works~\cite{touvron2021training,touvron2022deit} to boost fine-tuning performance. Note that we use the same training recipe for all methods unless otherwise specified. For example, linear probing performs worse when augmentations are used~\cite{radford2021learning}. Now we will list the techniques used as well as their corresponding hyperparameters in parenthesis. Specifically, we use label-smoothing ($0.1$)~\cite{szegedy2016rethinking}, weight-decay ($0.1$), Mixup ($0.8$)~\cite{zhang2017mixup} and Cutmix ($1.0$)~\cite{yun2019cutmix}. We fine-tune models using the AdamW optimizer~\cite{loshchilov2017decoupled} for 30 epochs with a warm-up period of 5 epochs~\cite{touvron2021training}, per-step cosine decay schedule~\cite{he2022masked} and a batch size of $512$. We list all the compared methods and their method-specific tuning to reproduce our results. 
\begin{itemize}
    \item\textbf{ Vanilla Fine-Tuning (FT)}: We fine-tune all layers and sweep three learning rate $\eta_0\in\{1e-5,2e-5,3e-5\}$.
    \item\textbf{ Linear Probing (LP)}: We only fine-tune the head classifier and sweep three learning rates $\eta_0\in\{5e-2,1e-2,5e-3\}$. We don't use any data augmentations (e.g., label-smoothing, Mixup and Cutmix) as they decrease LP performance.
    \item \textbf{BitFit}~\cite{zaken2021bitfit}: We fine-tune all the bias terms and the head classifier and sweep three learning rate $\eta_0\in\{5e-2,1e-2,5e-3\}$.
    \item \textbf{L2-SP}~\cite{xuhong2018explicit}: We add L2-SP regularization, use the best-validated learning rate from FT, and sweep three three regularization hyperparameters $\mu\in\{1e-5,1e-4,1e-3\}$.
    \item \textbf{LP-FT}~\cite{kumar2022fine}: We first LP for 15 epochs, sweeping three learning rates $\eta_0\in\{5e-2,1e-2,5e-3\}$, and FT the best-validated model for another 15 epochs, sweeping three learning rate  $\eta_0\in\{1e-5,2e-5,3e-5\}$.
    \item \textbf{Zero-Shot}~\cite{radford2021learning}: We run an inference with the pre-trained CLIP model with the extracted zero-shot classifier.
    \item \textbf{WISE}~\cite{wortsman2022robust}: We linearly interpolate the best validated FT model and the pre-trained model with a ratio of $0.5$.
    \item \textbf{TPGM}: We learn per-layer projection radii between the best validated FT model and the pre-trained model using the MARS projection (Eq.~\ref{eq:mars_proj}). 
\end{itemize}

\subsection{CLIP Pre-trained ViT-L on ImageNet}
\label{sec:vit_l}
In Sec.~\ref{sec:transformer_exp}, we presented fine-tuning results on ImageNet using CLIP pre-trained ViT-b. In this section, we conduct the same experiments with CLIP pre-trained ViT-L. As we noticed in the ViT-b experiments, WISE and TPGM perform much better than other competitors, so we focus on the comparison between the two here. We first present tabulated results in Tab.~\ref{tab:imagenet_vit_l}. We observe that TPGM improves both ID and OOD performance over vanilla FT. To compare fairly with WISE, we introduced TPGM-C (Sec.~\ref{sec:transformer_exp}), which uses an L2 regularization on the learned projection radii to control the distance to the pre-trained model. With proper regularization, TPGM-C outperforms WISE on both ID and OOD performance. We also provide a figure of ID vs. OOD performance with different WISE interpolation ratios and different TPGM-C regularization strengths in Fig.~\ref{fig:tpgm_wise_large}. We observe the same trend as in the ViT-b experiments (Sec.~\ref{sec:transformer_exp}): at the same ID performance, TPGM has better OOD performance.
\begin{table*}[h!]
\caption{\textbf{ImageNet Results using CLIP pre-trained ViT-L.} TPGM improves OOD performance significantly without losing ID performance. TPGM-C achieves the best OOD performance while maintaining a more competitive ID performance compared to the current state-of-the-art method WISE. TPGM-C is a controlled variant of TPGM, designed to lower its ID performance to the same level as WISE for a fair comparison of OOD performance.
}
\centering

\resizebox{0.9\linewidth}{!}{
\begin{tabular}{c|cc|ccc|cccc} 

\toprule
&\multicolumn{2}{c|}{ID} & \multicolumn{3}{c|}{OOD} & \multicolumn{4}{c}{Statistics} \\

&ImageNet & ImageNet-V2 & ImageNet-A & ImageNet-R & ImageNet-S & ID Avg. & OOD Avg. & ID $\Delta$ ($\%$) & OOD $\Delta$ ($\%$)\\
\midrule
Vanilla FT &\textbf{87.24}	&79.25	&49.67&	63.29&	61.62&	83.25&	58.19&	0.00&	0.00\\
Zero-Shot~\cite{radford2021learning} &75.00&	69.95&	52.21&	71.69&	58.24&	72.48&	60.71&	\color{red}-12.94&	\color{Green}4.33\\
WISE~\cite{wortsman2022robust} &85.33&	78.50&	58.26&	75.37&	64.84&	81.92&	66.16&	\color{red}-1.60&	\color{Green}13.68\\
\midrule
TPGM-C &86.02&	78.83&	\textbf{59.29}&	\textbf{76.32}&	65.00&	82.43&	\textbf{66.87}&	\color{red}-0.99&	\color{Green}\textbf{14.91}\\
TPGM &87.00&	\textbf{79.81}&	58.31&	74.41&	\textbf{65.13}&	\textbf{83.41}&	65.95&	\color{Green}\textbf{0.19}&	\color{Green}13.33 \\

\bottomrule
\end{tabular}
}
\label{tab:imagenet_vit_l}
\end{table*}

\begin{figure}
     \centering
     \includegraphics[width=0.45\textwidth]{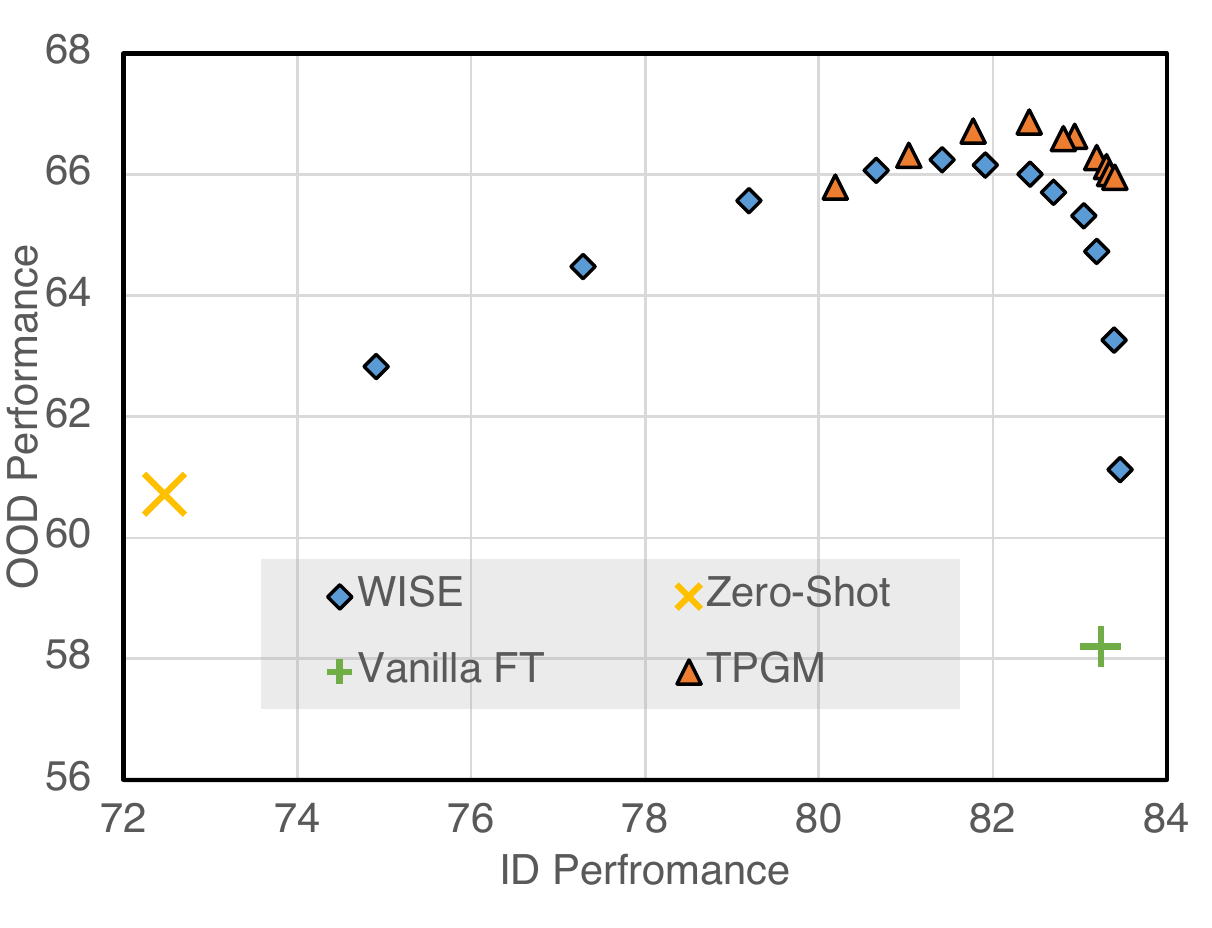}
     \caption{\textbf{ID and OOD performance of TPGM and WISE with different hyperparameters using CLIP pre-trained ViT-L, fine-tuned on ImageNet.} Sweeping different hyperparameters for both WISE and TPGM shows that learning per-layer constraints is superior to learning a single constraint.
   }
     \label{fig:tpgm_wise_large}
\end{figure}

\subsection{Comparisons between TPGM-L2 and TPGM-MARS}
\label{sec:compare_proj}
In the main paper, we presented two possible projections: L2 projection (Eq.~\ref{eq:l2_proj}) and MARS projection (Eq.~\ref{eq:mars_proj}). Both projections provide closed-form solutions. We can use either of them in TPGM. In this section, we present comparisons between the two.

\begin{table}[t]
\caption{\textbf{Comparison between MARS and L2 projections on DomainNet using ResNet50}.
}
\centering

\resizebox{0.9\linewidth}{!}{
\begin{tabular}{c|c|c|cccc|c} 

\toprule
&&\multicolumn{1}{c|}{ID} & \multicolumn{4}{c|}{OOD} & \multicolumn{1}{c}{Statistics} \\

&&Real & Sketch & Painting & Infograph & Clipart &  OOD Avg. \\
\midrule

\multirow{2}{*}{CLIP}
&MARS&\textbf{83.64} &	\textbf{38.78} &	43.11 &	\textbf{28.70} &	\textbf{48.01} &	\textbf{39.65}\\
& L2 &82.72&	37.18&	\textbf{43.33}&	25.99&	45.71&	38.05\\
\midrule
\multirow{2}{*}{MOCO}
& MARS & 81.66&	\textbf{35.97}&	\textbf{46.68} &	\textbf{20.34}&	\textbf{46.11} &	\textbf{37.27}\\
 & L2 & 81.66	&33.96	&45.82	&18.71	&44.45	&35.74\\
\bottomrule
\end{tabular}
}
\label{tab:l2_vs_mars_resnet}
\end{table}

\textbf{ResNet Experiments on DomainNet.} For ResNet experiment in Sec.~\ref{sec:resnet_exp}, we use a CLIP pre-trained ResNet50 and an ImageNet pre-trained ResNet50. For TPGM, we use $f_{proj}=1$ and $T_{proj} = 1$. In Tab.~\ref{tab:l2_vs_mars_resnet}, we compare the performance of TPGM using MARS and L2 projections on DomainNet-Real with $100\%$ of its data. We observe that in this setting MARS performs better than L2 projection.

\begin{figure}
     \centering
     \includegraphics[width=0.45\textwidth]{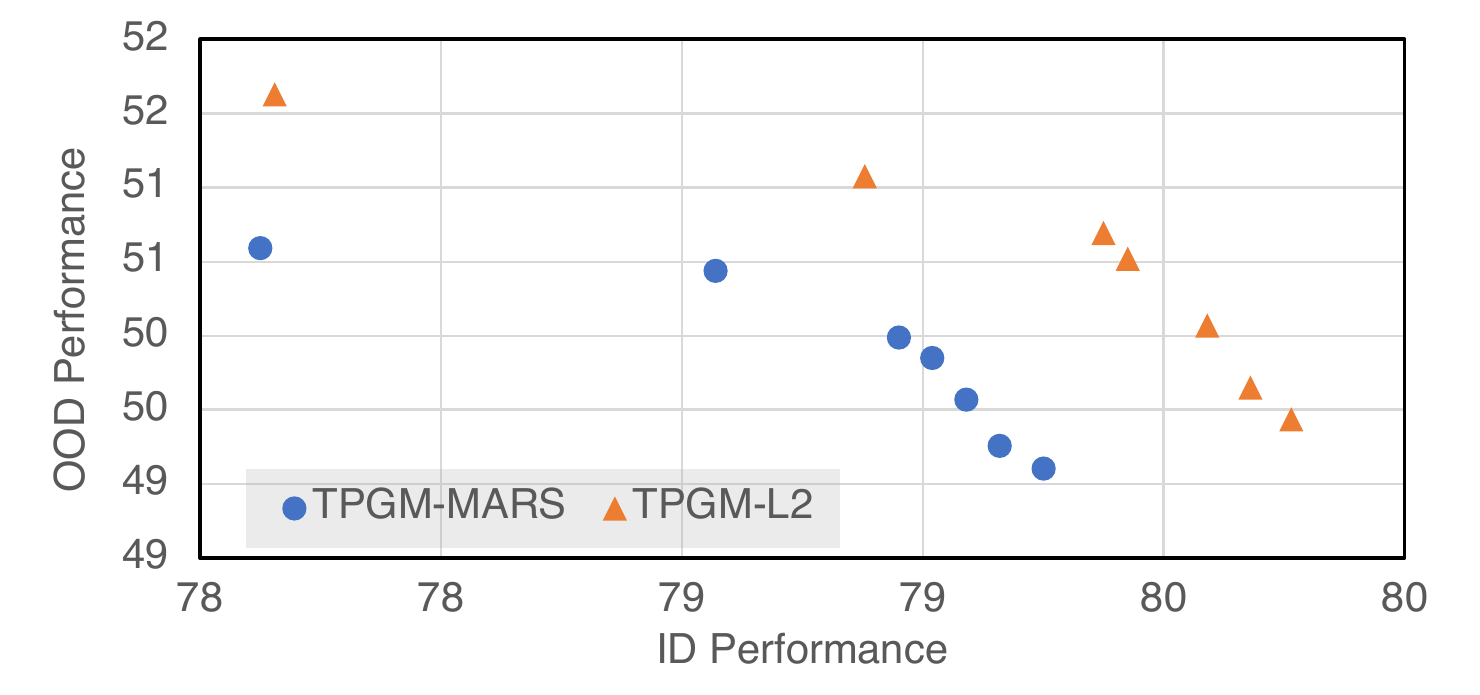}
     \caption{\textbf{Comparison between MARS and L2 projections on ImageNet using ViT-B.}}
     \label{fig:l2_vs_mars_transformer}
\end{figure}
\textbf{Transformer Experiments on ImageNet.} For Transformer experiments in Sec.~\ref{sec:transformer_exp}, we use a CLIP pre-trained ViT-B. For TPGM, we use $f_{proj}=T-1$ and $T_{proj} = 200$. Following the main paper, we add L2 regularization to the projection radii and sweep a range of values from $4e-3$ to $1e-4$. In Fig.~\ref{fig:l2_vs_mars_transformer}, we compare the performance of TPGM using MARS and L2 projections on ImageNet ID and OOD datasets. We observe that L2 projection always outperforms MARS projection in this setting.

\subsection{WISE for CLIP Pre-trained ResNet}
\label{sec:resnet_wise}
\begin{figure}
     \centering
     \includegraphics[width=0.45\textwidth]{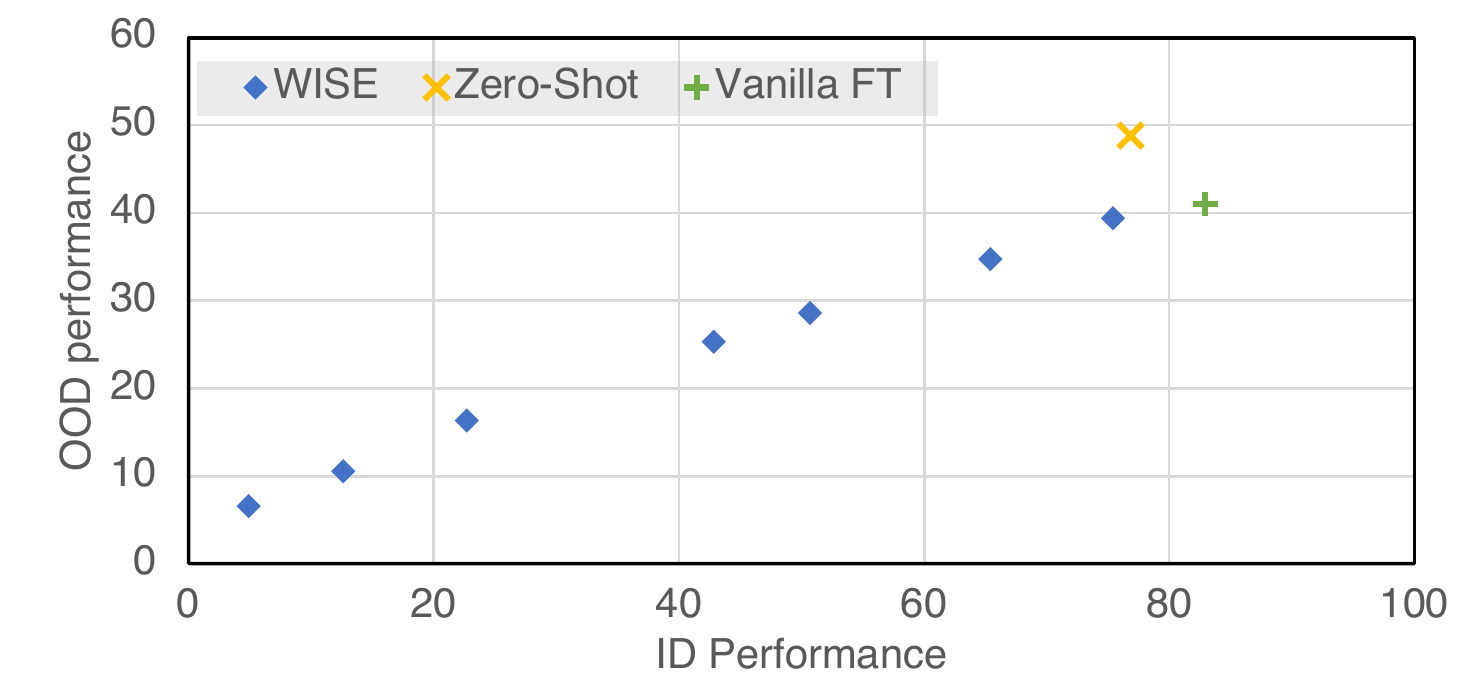}
     \caption{\textbf{WISE interpolation ratio sweeping using CLIP pre-trained ResNet50 on DomainNet.}
   }
     \label{fig:resnet_wise}
\end{figure}
The prior work~\cite{wortsman2022robust} and our experiments in Sec.~\ref{sec:transformer_exp} verified that CLIP pre-trained Transformers have very good linear connectivity. This means that when linearly interpolating between the pre-trained model and a fine-tuned model, the output does not degrade much. In this case, we observe significantly improved OOD generalization with minimal ID performance loss. However, the same trend is not observed when switching the architrave to ResNet50. Similar to the prior work~\cite{wortsman2022robust}, we extract a zero-shot classifier for DomainNet classes using a CLIP pre-trained ResNet50 and conduct the same linear interpolation ratio sweeping as in the main paper. In Fig.~\ref{fig:resnet_wise}, we plot ID performance against the OOD performance of WISE with different ratios, the pre-trained (zero-shot) model, and the vanilla fine-tuned model. Notably, we observe a significant drop in performance when interpolating between the pre-trained model and a fine-tuned model. This shows that CLIP pre-trained ResNet does not enjoy the same linear connectivity as its Transformer counterpart.

\subsection{Smoothing Comparison using MOCO-V3}
\label{sec:moco_smoothing}
\begin{table}[t]
\caption{\textbf{DomainNet Results using MOCO-V3 pre-trained ResNet50 Results with 100$\%$ Real Data.} TPGM without TV smoothing achieves the best OOD performance but with slightly worse ID performance compared to vanilla FT. TV smoothing can effectively mitigate this negative effect. 
}
\centering

\resizebox{0.9\linewidth}{!}{
\begin{tabular}{c|c|cccc|c} 

\toprule
&\multicolumn{1}{c|}{ID} & \multicolumn{4}{c|}{OOD} & \multicolumn{1}{c}{Statistics} \\

&Real & Sketch & Painting & Infograph & Clipart &  OOD Avg\\
\midrule
Vanilla FT & 81.99 &	31.52 &	42.89 &	18.51 &	44.98 &	34.47 \\
\midrule
TPGM w/o TV& 81.66 &	\textbf{35.97} &	\textbf{46.68} &	\textbf{20.34} &	46.11 &	\textbf{37.27}\\
TPGM w/ TV& \textbf{82.66} &	{35.35} &	{46.20}&	{20.13 }&	45.75 &	{36.86}\\
\bottomrule
\end{tabular}
}
\label{tab:moco_smoothing}
\end{table}

In the main paper, we found that training with MOCO-V3 pre-trained ResNet50 on DomainNet can benefit from total variation (TV) smoothing (Appendix~\ref{sec:smoothing}). Here we show a detailed comparison between TPGM with and without smoothing for this particular setting in Tab.~\ref{tab:moco_smoothing}. We observe that TPGM without smoothing achieves the best OOD performance however with a slight decrease in ID performance compared to vanilla FT. This might be caused by the conservative nature of TPGM as discussed in Appendix~\ref{sec:smoothing}. When TV smoothing is added, we observe that TPGM brings improvement to both ID and OOD performance over vanilla FT.

\subsection{Computation Overhead}
\label{sec:computation}
TPGM inevitably adds some computation overhead to a vanilla fine-tuning pipeline (though not inference). The majority of computation cost comes in Alg.~\ref{alg:alg2}, where the algorithm needs to conduct gradient updates on the projection parameters. While this overhead is negligible when we set $f_{freq} = T-1$, i.e., \textit{projection update} is only called once at the end of training as in the Transformer experiments (Sec.~\ref{sec:transformer_exp}), the overhead increases when $f_{freq} = 1$. In our ResNet experiments (Sec.~\ref{sec:resnet_exp}), to decrease computation cost, we only the update projection parameters once during each call, i.e, $T_{proj}=1$. Qualitatively, we see an increase of training time from $\sim29$ hours to $\sim34$ hours when TPGM is added, a $\sim17\%$ increase. However, this increase can be justified by the fact that manually searching for per-layer constraints can be intractable.

\end{document}